\newcommand{\Schwefel}{\textsc{Schwefel}\xspace}
\newcommand{\Rosenbrock}{\textsc{Rosenbrock}\xspace}
\newcommand{\Elliptic}{H. C. \textsc{Elliptic}\xspace}
\newcommand{\Rastrigin}{\textsc{Rastrigin}\xspace}
\newcommand{\Sphere}{\textsc{Sphere}\xspace}
\newcolumntype{Y}{>{\centering\arraybackslash}X}
\newtheorem{defi}{Definition}
\newtheorem{sat}{Theorem}
\newtheorem{lem}{Lemma}
\newcommand{\En}{\Phi}
\renewcommand{\P}{{\rm P}}
\newcommand{\argmin}{\operatornamewithlimits{argmin}}
\newcommand{\cond}[1][]{\;#1|\;}
\providecommand{\e}[1]{\ensuremath{\!\times\! 10^{#1}}}
\newcommand\B{\rule[-2ex]{0pt}{0pt}}        
\newcommand{\GminusX}{\Delta}
\begin{document}

\title{Self-adaptive Potential-based Stopping Criteria for Particle
  Swarm Optimization
}

\author{Bernd Bassimir
\thanks{Corresponding author} \quad Manuel Schmitt \quad Rolf Wanka
\\
Department of Computer Science\\
University of Erlangen-Nuremberg, Germany\\
{\sf\small $\{$bernd.bassimir, manuel.schmitt, rolf.wanka$\}$@fau.de}}

\date{}

\maketitle

\begin{abstract}
We study the variant of Particle Swarm Optimization (PSO) that applies random velocities in a dimension instead of the regular velocity update equations as soon as the so-called \emph{potential} of the swarm falls below a certain bound in this dimension, arbitrarily set by the user. In this case, the swarm performs a \emph{forced move}. In this paper, we are interested in how, by counting the forced moves, the swarm can decide for itself to stop its movement because it is improbable to find better solution candidates as it already has found. We formally prove that when the swarm is close to a (local) optimum, it behaves like a blind-searching cloud, and that the frequency of forced moves exceeds a certain, objective function-independent value. Based on this observation, we define stopping criteria and evaluate them experimentally showing that good solution candidates can be found much faster than applying other criteria.
\end{abstract}


\paragraph*{Declarations of interest} None.

\section{Introduction}
\label{sec:intro}

\paragraph{Background}
Particle Swarm Optimization (PSO) is a meta-heuristic for so-called continuous black box optimization problems, which means that the objective function is not
explicitly known in form of a, e.\,g., closed formula.
PSO produces good results in a variety of different real world applications.
The classical PSO as first introduced by Eberhart and
Kennedy~\cite{ken_eb_1995,eb_ken_1995}
in the year 1995 works in (solution candidates improving) iterations and
can be very easily implemented and
adapted to the users' applications, which lead to increased
attention not only among computer scientists.
In order to further improve the performance,
many authors present changes to the original, ``plain,'' or classical
PSO scheme (for exact definitions, see Sec.~\ref{sec:definitions})
to improve the quality of the returned solution.

A serious problem of PSO that can be sometimes observed is the
phenomenon of premature stagnation, i.\,e., the convergence of the swarm
to a non-optimal solution.
This phenomenon has been theoretically addressed by
Lehre and Witt in~\cite{LW:11}.
To overcome such stagnation, the authors propose \emph{Noisy PSO} 
that adds a ``noise'' term to the velocity at every move.
They prove that for the Noisy PSO 
started on a certain simple $1$-dimensional objective
function the first hitting 
time of the $\delta$-neighborhood of the \emph{global} optimum is finite.
However, as proved in~\cite{SW:13}, 
premature stagnation of classical PSO does not occur at all
when the search space
is $1$-dimensional and the objective function is continuous, i.\,e., in the $1$-dimensional case, PSO
provably finds at least a local optimum, almost surely (in the
well defined sense of probability theory).
Furthermore, \cite{SW:13} shows a similar result for a
slightly modified PSO in the
general $D$-dimensional case
(for stagnation-related results regarding the unmodified
PSO, see~\cite{RSW:15}).
This slightly modified PSO assigns a small random velocity in solely
one dimension only if the so-called \emph{potential} of the
swarm -- a fundamental, measurable quantity of the swarm -- of all
particles in this dimension falls below a certain (arbitrary and small)
bound $\delta$.
In the following, we call such random velocity
particle moves \emph{forced steps} and the PSO variant f-PSO.
The f-PSO provably finds a local optimum almost
surely~\cite{SW:13}.
Monitoring the swarm's potential and increasing it from time to time
by the forced steps
is the key ingredient to mathematically proving successful
convergence to a (local) optimum.
In this paper, we will use the forced steps of f-PSO and the
potential for overcoming another problem of heuristics, the question
when to stop the algorithm.

\paragraph{Problem and new contribution}
An important problem that arises in the context of PSO and other
iterative optimization methods (see \cite[p.\,23f]{SW:81} and further
papers, referenced in the related work section below)
is \emph{when} to stop the iterations
and to return the best found admissible solution.
Usually, the process is terminated (i) when an upper limit
on the number of iterations is reached,
or (ii) when an upper limit on the number of evaluations of the
objective function is reached, or (iii) when
the chance of achieving significant improvements
in further iterations is extremely low.
The choice of the mentioned two upper limits obviously depends
on the concrete objective function which means that the
user has to have detailed knowledge of the objective
function and to ``intervene'' by hand.
In (iii), it is desired and advantageous that the algorithm adaptively
decides when to stop, so external intervention is not necessary
anymore.
To achieve this for PSO, many criteria were introduced in the literature.
For a short overview, see below.
A commonly used criterion is the swarm diameter, i.\,e., the maximum distance between 
two particles, as a measure for the expansion and thus the movement capability of the swarm.
When PSO is extended with forced steps, this and also other criteria of this kind do not work because the expansion
of the overall swarm is forcefully kept above a certain value and can no longer converge like it does for the classical PSO algorithm. 
Just the globally best position found by the swarm converges to optimum.

The goal of this paper is to characterize
the behavior of the f-PSO when it is close to a (local) optimum.
Our experiments and mathematical investigations show that the number of forced moves does not
only increase significantly when the distance to the next
local optimum falls below a certain bound, but that
additionally the number of forced moves performed close to a local optimum
is independent of the objective function.
In particular, we prove by potential arguments that the swarm
``pulsates'' in a cloud around the best solution candidate found so far.
Note that a similar behavior has been described recently
in~\cite{YSG:18} for the classical PSO.
Therefore, by measuring the frequency of occurrences of forced moves,
this frequency can act as a stopping criterion, so the swarm may come to
a self-determined halt.
All findings are experimentally supported.

\paragraph{Related work}
Adaptive stopping criteria for PSO
have been investigated by Zielinski et al. \cite{ZPL:05} and
by Zielinski and Laur \cite{ZL:07}.
In these papers, a list of upper limit-based and adaptive termination
criteria is presented and in experiments applied to ($2$-dimensional)
established benchmark functions and a real-world problem, resp.
In \cite{KHLF:07}, Kwok et al. introduce a stopping criterion for PSO
based on the rate of improvements found by the swarm
in a given time interval. Quality of the
termination is assured using the non-parametric sign-test
enforcing a low false-positive rate.
A further stopping approach due to Ong and Fukushimah~\cite{OF:15}
combines PSO with gene matrices.

To name some work beyond PSO, Ribeiro et al.~\cite{RRS:11}
introduce a stopping criterion that stops the
GRASP (Greedy Randomized Adaptive Search Procedures)
algorithm when the probability of an improvement is below a
threshold under an experimentally fitted normal distribution.

Safe et al.~\cite{SCPB:04} present a study of various aspects
associated with the specification of termination conditions
for simple genetic algorithms.
In \cite{AK:00}, Aytug and Koehler introduce a stopping criterion that
stops a genetic algorithm when the optimal solution is found with a
specified confidence and thus no real further progress can be expected.

In a very general, ground breaking investigation, Solis and Wets~\cite{SW:81}
consider random search in general and also address the
question of stopping criteria.
In this context, Dorea~\cite{D:90} presents two adaptive stopping criteria.

Stopping criteria in the context of multi-objective optimization is
presented by Mart{\'\i} et al. \cite{MGBM:16}.

\paragraph{Organization of paper}
Sec.~\ref{sec:definitions} presents the necessary description of the classical and
 forced PSO (f-PSO) and the definition of the potential of a swarm.
In Sec.~\ref{sec:fm}, we experimentally and mathematically analyze
the behavior of f-PSO when it comes close to a (local) optimum.
In Sec.~\ref{sec:term}, based on the mathematical analysis in Sec.~\ref{sec:fm},
we present the new termination criteria and show their practicability
by experimental evaluations.


\section{Definitions}
\label{sec:definitions}
Due to the wide variety of existing PSO variants, we first state the exact ``classical'' PSO algorithm on which our work is based.

\begin{defi}[Classical PSO Algorithm]\label{cPSO}
A \emph{swarm} $\cal S$ of $N$ particles moves through the $D$-dimens-ional search space
$\mathbb{R}^D$ with $\mathcal{D}=\{1,\ldots,D\}$ being the set of dimensions. Each particle $n\in{\cal S}$ consists of 
a \emph{position} $X^n\in\mathbb{R}^D$, a \emph{velocity} $V^n\in\mathbb{R}^D$ and a \emph{local attractor} $L^n\in\mathbb{R}^D$, storing the best position particle $n$ has visited so far. 
Additionally, the particles of the swarm share
information via the \emph{global attractor} $G\in\mathbb{R}^D$, describing the best 
point \emph{any} particle has visited so far, i.\,e.,
as soon as a particle has performed its move\footnote{the particles' moves are executed sequentially, so there
is some arbitrary order of the particles.}, it possibly updates the global attractor immediately.

The actual movement of the swarm is governed 
by the following \emph{movement equations} where 
$\chi$, $c_1$, $c_2\in\mathbb{R}^+$
are some positive constants to be fixed later, and $r$ and $s$ are drawn u.\,a.\,r. from $[0,1]^D$
every time the equation is applied.
\begin{align*}
V^n &:= \chi\!\cdot\! V^n + c_1\!\cdot\! r\odot (L^n-X^n) + c_2\!\cdot\! s\odot (G-X^n)\\
X^n &:= X^n+V^n\nonumber
\end{align*}
Here, $\odot$ denotes entrywise multiplication (Hadamard product).
The application of the equations on particle $n$ is called the
\emph{move} of $n$.
When all particles have executed their moves,
the swarm has executed one \emph{iteration}.
\end{defi}
Now we repeat the definition of a swarm's potential
measuring how close it is to convergence, i.\,e., we describe a measure for its movement. 
A swarm with high potential should be more likely to reach search points far away from the current
global attractor, while the potential of a converging swarm approaches $0$.
These considerations lead to the following definition~\cite{SW:13a} (the original
definition in~\cite{SW:13} is more complex just for
technical reasons):

\begin{defi}[Potential]\label{Energy}
Fix a moment of the computation of swarm $\cal S$.
For $d\in \mathcal{D}$, the current \emph{potential} $\En_d$
of $\cal S$ in dimension $d$ is
$$
\En_d:=\sum_{n=1}^N 
(\underbrace{|V_d^n|\allowbreak +|G_d-X_d^n|}_{=:\ \phi_d^{n}})
=\sum_{n=1}^N  \phi_d^{n}
$$
$\Phi=(\Phi_1,\ldots,\Phi_D)$ is the total potential of $\cal S$, and
$\phi_d^{n}$ is the \emph{contribution} of particle $n$ to the
potential of $\cal S$ in dimension~$d$.
\end{defi}

Note that the
potential of the swarm
has an entry
in every dimension.
The swarm comes to a halt if $\Phi \to (0,\ldots,0)$.
So, if the particles come close to $G$, the swarm may stop even if $G$ is
a non-optimal point, an incident that is called (premature) stagnation.
The single values of $\Phi$ can be actually computed and, hence,
be used for decisions on the swarm.
Between two different dimensions, the potential difference might be large,
and ``transferring''
potential from one dimension to another is not possible due to the
movement equations.
On the other hand, along the same dimension the particles
influence each other and can transfer potential from one particle
to the other.
This is the reason why there is no potential of individual particles,
but only their contribution to the potential.

To address the phenomenon of stagnation, \cite{SW:15} slightly
modified the PSO movement equations
from Definition~\ref{cPSO} as follows by ``recharging'' potential and, hence,
keeping the swarm moving furthermore.

\begin{defi}[f-PSO]\label{modified}
The modified movement of the swarm is governed 
by the 
following movement equations
where 
$\chi$, $c_1$, $c_2$, $\delta\in\mathbb{R}^+$ are some positive constants
to be fixed later, $r$, $s$ and $t$ are drawn u.\,a.\,r. from $[0,1]$ for every move and dimension of a particle.
\begin{align*}
V_d^n &:= 
\begin{cases}
  (2\cdot t - 1)  \cdot \delta, & \hspace{-9.15em} \text{if } \forall n' \in\mathcal{S}: |V_d^{n'}| + |G_d - X_d^{n'}| < \delta \text{\ \emph{[forced velocity update]}}\\
  \chi\cdot V_d^n + c_1\!\cdot\! r\cdot (L_d^n-X_d^n)
  +c_2\!\cdot\! s\cdot (G_d-X_d^n), & otherwise \label{modVelup} \hfill\text{\quad\emph{[usual, regular velocity update]}}
\end{cases}\\
X^n &:= X^n+V^n.\nonumber
\end{align*}
If the forced velocity update
applies to a particle, we call its move and the corresponding dimension in the move \emph{forced}.
An iteration of the swarm is called \emph{forced} if during this iteration at least one particle performs a forced move. The whole method is called
\emph{f-PSO}.
\end{defi}

If it is necessary to identify the values at the beginning of
iteration~$i$, we write $X_d^{n,i}$, $V_d^{n,i}$ etc.

\begin{algorithm}[H]
\caption{\label{alg:modPSO}\quad f-PSO}

\SetKwInOut{Input}{input}
\SetKwInOut{Output}{output}

\Input{Objective function $f:\mathbb{R}^D\to\mathbb{R}$, number $N$ of particles}
\Output{$G\in\mathbb{R}^D$}
\For{$n=1 \to N$}{
  Initialize $X^n$ randomly\;
  Initialize $V^n$ with $\vec{0}$\;
  Initialize $L^n := X^n$\;}
  Initialize $G := \argmin\limits_{\{L^n\mid n \in \mathcal{S}\}}f$\;
\Repeat{termination criterion met \label{line-sixteen}\tcp*[h]{One can use our new criteria developed in Sec.~\ref{sec:term} below}}{\tcp{Iterations; $t$, $r$, $s$
drawn u.\,a.\,r.\ from $[0,1]$ every time}
  \For{$n=1 \to N$}{
    \For{$d=1 \to D$}{\tcp{We use $\delta=10^{-7}$  in our experiments}
      \eIf{$\forall n' \in\mathcal{S}: |V_d^{n'}| + |G_d - X_d^{n'}| < \delta$\label{line-nine}}{\tcp{Execute a forced velocity update}
        $V_d^n:=(2\cdot t - 1)  \cdot \delta$ \label{line-ten}\tcp*[l]{hence, $V_d^n\in[-\delta,\delta]$}
        }{\tcp{Execute the usual, regular velocity update}
        \tcp{We use $\chi = 0.72984$, $c_1= 1.49617$, $c_2=1.49617$ in our experiments}
        $V_d^n:=\chi\cdot V_d^n + c_1\cdot r\cdot (L_d^n-X_d^n) + c_2 \cdot s\cdot (G_d-X_d^n)$\;
      }
      $X_d^n := X_d^n + V_d^n$\;
    }
    \lIf{$f(X^n)\le f(L^n)$}{
      $L^n := X^n$
    }
    \lIf{$f(X^n)\le f(G)$}{
      $G := X^n$
    }
  }
}
\textbf{return} $G$\;
\end{algorithm}

Algorithm~\ref{alg:modPSO} provides a formal and detailed overview over f-PSO.
The introduction of forced velocity updates guarantees that
the swarm (or more precisely, the global attractor $G$) almost surely
does not converge to a non-optimal point, but
finds a local optimum~\cite{SW:13}. In our analysis and for the experiments,
we used the common parameter settings $\chi = 0.72984$, $c_1= 1.49617$, $c_2=1.49617$ and $N>1$ as suggested and used in \cite{CK:02},
which are parameter settings that are widely used in the literature.


\section{Behavior of the f-PSO algorithm}
\label{sec:fm}

The idea 
of the modification of the classical PSO algorithm is to help the
swarm overcome
(premature) stagnation.
The modification is implemented
in the calculation of the new velocity during a move
of a particle (see Def.~\ref{modified} and Lines~\ref{line-nine}
and~\ref{line-ten} of Algorithm~\ref{alg:modPSO}).
The new velocity of a particle in the current dimension $d$ is drawn u.\,a.\,r.
from ${[-\delta, \delta]}$ when for all particles $n'\in\mathcal{S}$
their contributions  $\phi_d^{n'}$ to
the potential in dimension $d$ is less than $\delta$, which
means
that the range in which the swarm can optimize is small in dimension $d$.

The main topic of this section is to examine if
at such a situation the particle swarm uses from now on only forced moves or, more desirable, the swarm will recover
and continue using regular
velocity updates. In \cite{SW:15}, 
it is shown
that
indeed the latter is the case, unless
the global attractor $G$ is already in the neighborhood of a local optimum.
As in this case almost all moves are forced, we will see in
Sec.~\ref{sec:term} that this
can be translated into a criterion to stop the PSO's execution.

\subsection{Experiments}
We first introduce the notion of the forcing frequency to quantify the number of applications of the forced velocity update (Line~\ref{line-ten} of Algorithm~\ref{alg:modPSO}).

\begin{defi}[Absolute and relative forcing frequency]\label{sigma}
  Let $I$ denote a (time) interval of $|I|$ iterations during a run of the \emph{f-PSO} algorithm.
\begin{itemize}
\item[(a)]
  Let $\sigma(I,d)$ denote the number of times
  forced velocity updates (Line~\ref{line-ten}) have been
  executed in interval $I$ for dimension $d$, $d\in\{1,\ldots,D\}$.
  $\sigma(I,d)$ is called the \emph{absolute forcing frequency} in dimension $d$
  over interval $I$.
\item[(b)]
  $\sigma(I)=\sum_{d=1}^D\sigma(I,d)$ is the total number
  of forced velocity updates (Line~\ref{line-ten})
  in interval $I$ counted over all
  dimensions.
  $\sigma(I)$ is called the \emph{absolute forcing frequency} over
  interval $I$.
\end{itemize}
Analogously, the \emph{relative forcing frequency} is $\sigma(I,d)/|I|$ and
$\sigma(I)/|I|$, resp.
\end{defi}

To explore the behavior of the particle swarm with respect to
the absolute forcing frequency $\sigma(I,d)$,
we performed a series of experiments on the well known benchmark functions
\Schwefel, \Rosenbrock, \Rastrigin, \Elliptic  and \Sphere
(for a comprehensive overview of these and many other benchmark functions,
see~\cite[Sec~4.2]{helwig:10}).
\Sphere and \Elliptic will be analyzed in detail in Sec.~\ref{subsec:ExpSphere}.
The tests were performed with Ra\ss{}' HiPPSO~\cite{HiPPSO}, a high precision implementation of PSO,
in order to rule out the influence of insufficient computer systems'
precision when applying a forced velocity update
with $\delta$
close to the precision of a, e.\,g., \texttt{long double} variable in C++.
The tests were performed with $N=3$ particles, $D=30$ dimensions, $\delta=10^{-7}$ and the
well-known swarm parameters already mentioned in Sec.~\ref{sec:definitions}.

\paragraph{\Schwefel function}

Fig. \ref{fig:opt_steps_schw} presents the measurements
obtained when optimizing the \Schwefel function with $D=30$.
The absolute forcing frequencies per dimension over the intervals $I_i=[0\ldots 50\,000\cdot i]$ and the
function values $f(G)$ of the current global attractors $G$ at the end of each interval are
depicted.
One can see that the absolute forcing frequency (gradient in the figure) is
relatively small if $f(G)$ is far away from the (unique)
optimum value and increases considerably when $f(G)$ approaches
the optimum. Additionally, one can see that the gradient of the absolute forcing frequency,
i.\,e., the relative forcing frequency, tends to be constant for all $i \geq i_0$ after some value $i_0$.
This is a showcase and remains
true for most of the other tested benchmark functions.
\begin{figure}
  \centering
    \scalebox{.9}{\input{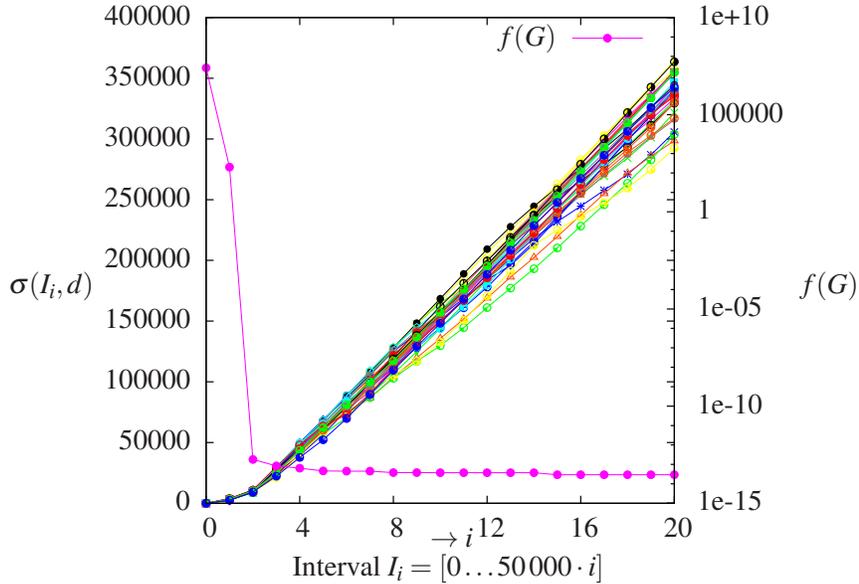}}
  \caption{Optimizing the \Schwefel function:
    development of the absolute forcing frequencies $\sigma(I_i,d)$,
    $d\in\{1,\ldots,30\}$, over the intervals
    $I_i=[0\ldots 50\,000\cdot i]$ (bundle of lines)
    compared to the development of
    the function value $f(G)$ of the global best position $G$
    after $50\,000\cdot i$ iterations (single red line).
    $N=3$ particles and $\delta=10^{-7}$ were used.}
  \label{fig:opt_steps_schw}
\end{figure}

\paragraph{\Rosenbrock function}

During the tests on the standard benchmark functions,
processing the \Rosenbrock function showed
a slightly different behavior.
Metaphorically speaking, there is a small ``banana-shaped''
valley in this function
that leads from a local optimum to the global optimum.
If the attractors are in this valley (what they are quite early),
the chance to improve
in some dimensions is quite small and
improvements can easily be voided
by setbacks in other dimensions.
If this happens, forced velocity updates will
be applied in some dimensions while in the other dimensions
the particles use the usual, regular velocity updates,
which can be seen clearly in Fig.~\ref{fig:opt_steps_ros}.
The chance to get moving into the optimum's direction again
is not zero, but quite small which leads to a long phase of
near-stagnation as also can be seen in Fig.~\ref{fig:opt_steps_ros}.
Eventually the swarm will re-start moving again, but it may
take a long time to do so.

\paragraph{\Rastrigin function}

Another phenomenon may occur on functions like \Rastrigin, where there
are many local optima.
Here, it might happen that the global attractor $G$ is already
close to a (good) local optimum, but the local attractor $L^n$
of some particle $n$ is close to a worse local optimum. 
In such a situation, the potential is governed by the distance between
$G$ and $L^n$, and the condition for executing a forced velocity update
is not satisfied, even though a good local near-optimum value $f(G)$
has already been found.
But our experiments showed that this happens hardly ever.
\begin{figure}
  \centering
  \scalebox{.9}{\input{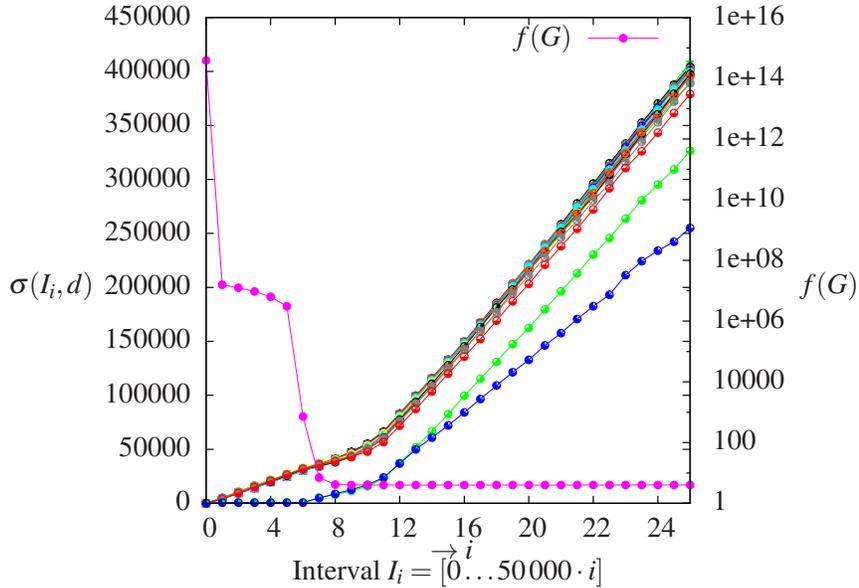}}
  \caption{Optimizing the \Rosenbrock function: 
    development of the absolute forcing frequencies $\sigma(I_i,d)$,
    $d\in\{1,\ldots,30\}$, over the intervals
    $I_i=[0\ldots 50\,000\cdot i]$ 
    compared to the development of
    the function value $f(G)$ of the global best position $G$
    after $50\,000\cdot i$ iterations (single red line).
    $N=3$ particles and $\delta=10^{-7}$ were used.
    Noticeable is the constant behavior of $\sigma(I_i,d)$ with $f(G)$
    far away from the optimal
    value of $0$, and the separation
    into forcing dimensions and non-forcing dimensions.
  }
  \label{fig:opt_steps_ros}
\end{figure}

\subsection{Experiments on the \Sphere Function, Phases, and
Theoretical Analysis}
\label{subsec:ExpSphere}

We now analyze the swarm's final behavior with the help
of experiments on \Sphere (the results for \Elliptic are similar).
These experiments show that the final behavior
can be classified into two main phases:
the approaching phase and the pulsation phase that in turn
has three distinct sub-phases: forced phase, lockout phase, recovery phase.
These findings hold for all objective functions, as the swarm
is in this final phase a kind of a blind searching cloud.

So, when the global attractor reaches the $\delta$-neighborhood of a local
optimum, the number of global attractor updates decreases and the
distance the global attractor moves becomes small.
At that time, the whole swarm will start to approach
the global attractor.
At some point during this \emph{approaching
phase}, the local attractors will be close to the global attractor and the
swarm will loose much of its potential. As described in \cite{SW:15},
the introduction of forced velocity updates
avoids that the actual positions of the particles 
approach each other
closer than a value of about $\delta$
and the swarm enters a kind of \emph{pulsation phase}:
periodically it contracts and expands.
However, the global attractor still converges (in the mathematical
``infinite time'' sense)
to the local optimum.
Also, the positions of the local attractors may further contract.

It can be observed that when the attractors are all almost at the same position and the particles converge to this position, the number of dimensions that are forced begins to increase.
During the pulsation
phase almost all particles are forced and the relative forcing frequency $\sigma(I,d)/|I|$ begins to
stagnate at a certain value $\partial\sigma_d$ (the gradient in the figures).
Hence, also $\sigma(I)/|I|$ begins to stagnate at $\partial\sigma$.
When this stagnation is reached, the f-PSO will become a periodic pulsating
process and will behave almost like a blind search in a box with side-length
of order of magnitude $\delta$ around the global attractor.
Hence, the objective function $f$ will become irrelevant and the value $\partial\sigma$
is independent of $f$.

\subsubsection{Experimental Identification of parameters influencing
$\partial\sigma$}

In order to identify
the parameters that influence 
$\partial\sigma$, a series
of experiments were performed with the \Sphere function. To study a pure pulsation
phase, the global and local attractors were initially all set to $\smash{\vec{0}}$, which is the global optimum of this function. With the swarm parameters $\chi$, $c_1$ and $c_2$ being fixed, the two crucial
remaining parameters that might influence
$\partial\sigma$ are
\begin{itemize}
\item the number $D$ of dimensions and
\item the number $N$ of particles,
\end{itemize}
whereas the choice of $\delta$ has no influence at all. By changing $\delta$ the range of $\phi_d$ causing a forced step is influenced for a dimension $d\in\mathcal{D}$. However, the random velocity assigned 
by the forced step in this dimension $d$ is changed in the same magnitude. Therefore the change of $\delta$ has no influence on the absolute forcing frequency as these two effects cancel each other.

Fig. \ref{fig:forced_dimensions_dimension} and \ref{fig:forced_dimensions_particle} show the effect of changes in the dimension number $D$ and swarm size $N$, resp., on the stagnation value $\partial\sigma$ for
fixed $|I|$.
\begin{figure}
  \centering
  \scalebox{.9}{\input{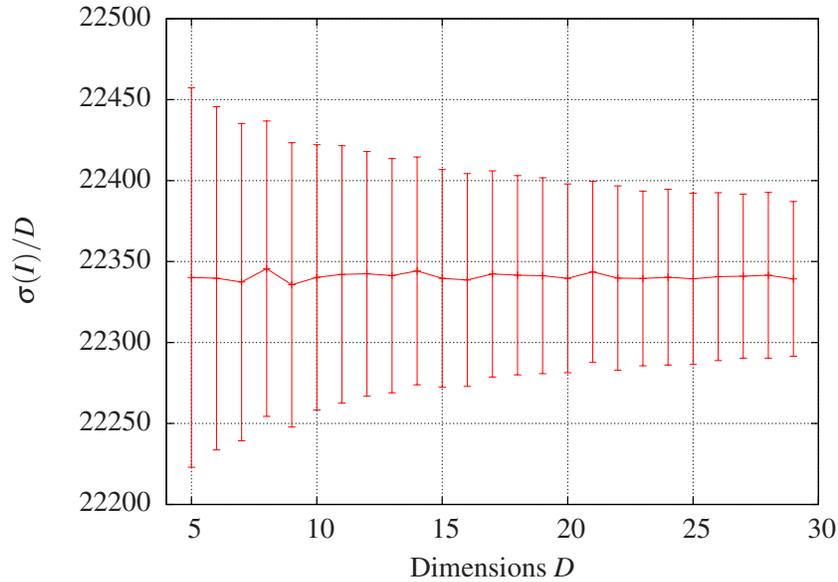}}
  \caption{Optimizing the \Sphere function: Absolute forcing frequency $\sigma(I)$ over intervals $I$ of length $|I|=50\,000$ relative to the number $D$ of dimensions at the global optimum
with $N=5$ particles and $\delta=10^{-7}$, varying the number $D$ of dimensions.
Shown are the average value and standard deviation of $100$ trials taking the average values of $10$ intervals per trial.}
  \label{fig:forced_dimensions_dimension}
\end{figure}
\begin{figure}
  \centering
  \scalebox{.9}{\input{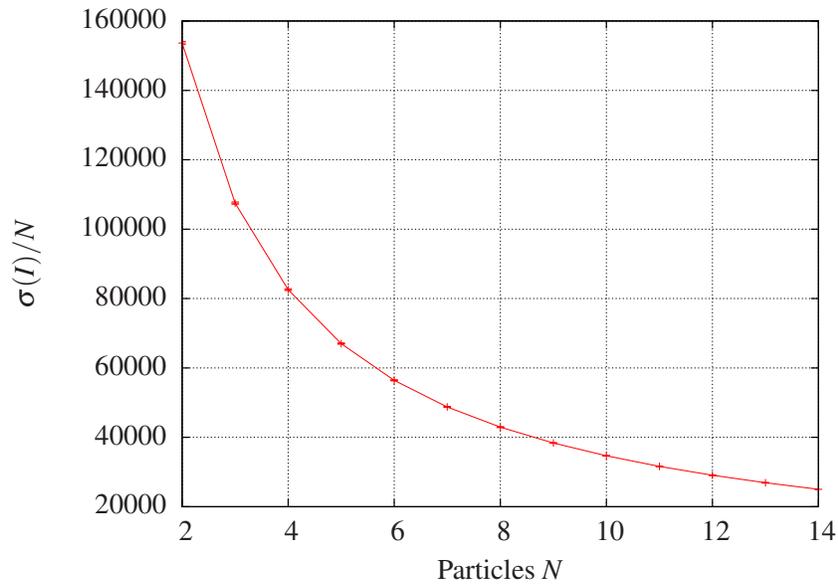}}
  \caption{Optimizing the \Sphere function: Absolute forcing frequency $\sigma(I)$ over intervals $I$ of length $|I|=50\,000$ relative to the number $N$ of particles at the global optimum
with $D=15$ dimensions and $\delta=10^{-7}$, varying the number of particles.
Shown are the average value and (the very small, almost invisible)
standard deviation of $100$ trials taking the average values of $10$ intervals per trial.}
  \label{fig:forced_dimensions_particle}
\end{figure}
\begin{figure}
  \centering
  \scalebox{.9}{\input{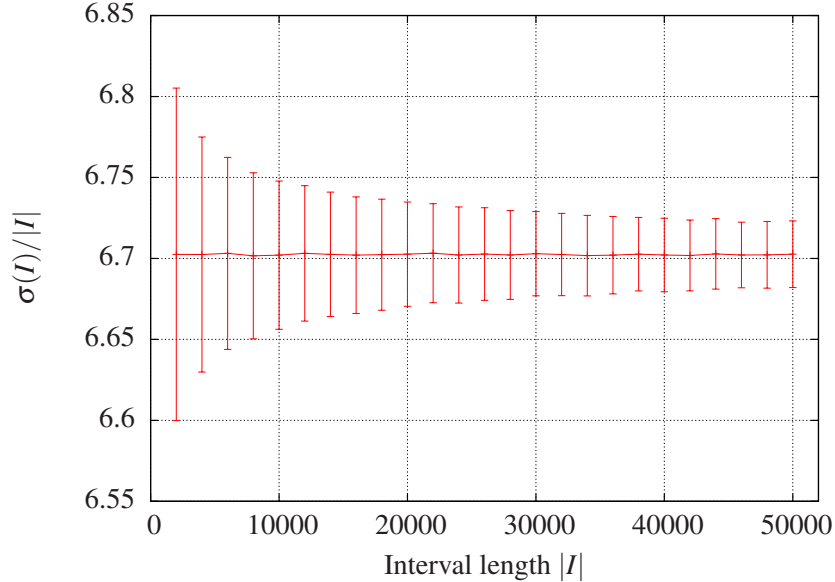}}
  \caption{Absolute forcing frequency $\sigma(I)$ over intervals $I$ relative to the length $|I|$ at the global optimum of the \Sphere function with $D=15$ dimensions and $\delta=10^{-7}$ and $N=5$ particles varying the length $|I|$ of the measured intervals. Shown are the average value and standard deviation of $100$ trials taking the average values of $10$ intervals per trial.}
  \label{fig:forced_dimensions_interval}
\end{figure}
The use of a forced move in a given dimension depends only on the
velocities
and positions
of all particles relative to the global attractor $G$ in this dimension. The dimensions are independent of each other, therefore the 
stagnation value (recall that $|I|$ is fixed)
for each number $D$ of dimensions does not change with an
increased number of dimensions as shown in Fig. \ref{fig:forced_dimensions_dimension}. The same
is not true for varying the number of particles, see Fig. \ref{fig:forced_dimensions_particle}.
If the length $|I|$ of the interval and the number $D$ of dimensions are fixed, increasing the number of
particles
increases the number of applications of the movement equations by $|I|\cdot D$ for each new particle. Therefore, more dimensions can be forced in a given interval, but there are also more particles that must have a partial potential $\phi$ less than $\delta$.

\subsubsection{Mathematical analysis of the sub-phases for
arbitrary objective functions}
We can identify three sub-phases during the pulsation phase which are repeated in a fixed order for a given dimension.
\begin{itemize}
\item[(i)]
Starting with a \emph{forced phase}, where each particle has its move in this dimension forced,
\item[(ii)] followed by a \emph{lockout phase} with the probability of a forced move being zero and
\item[(iii)]
finally a \emph{recovery phase} in which the particles
attempt to
converge to the global attractor until one move of a particle gets forced again and the cycle repeats itself.
\end{itemize}
To fully understand why this leads to a less
forced frequency $\sigma(I)$ per particle, see Fig.~\ref{fig:forced_dimensions_particle}, we have to take a closer look at the three sub-phases of the pulsation phase. As described in Def. \ref{modified}, a dimension $d\in\mathcal{D}$ is forced
during the move of a particle when all particles $n'$ including the particle itself have contribution $\phi_d^{n'}$ less than $\delta$
in this dimension~$d$.
For  $i \in \mathbb{N}$, $n \in \mathcal{S}$, and $d \in \mathcal{D}$
let $ F_{i,n,d}$  be the $\{0,1\}$-indicator variable being $1$ iff in iteration $i$ and dimension $d$ particle~$n$ has the velocity 
update forced.
The global and the local attractors are constant and in particular, it can be assumed that $G_d^i = G$ for fixed $G$.
We get the following observations regarding a forced move in a given dimension.

The first phase we analyze is the lockout phase.
Recall that $\phi_d^{n,i}=|V_d^{n,i}| + |G - X_d^{n,i}|$
is the contribution of particle $n$ to the potential of $\cal S$
in dimension $d$ in iteration $i$.

\begin{lem}
  \label{lem:stop}
  If $\phi_d^{n,i+1} 
      \geq \delta$ for particle $n$, then
  $\forall n'\in\mathcal{S},n'>n: F_{i,n',d} = 0$ and
  $\forall n'\in\mathcal{S}, n'\leq n:F_{i+1,n',d} = 0$.
\end{lem}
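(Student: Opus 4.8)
The plan is to exploit the sequential order in which the particles execute their moves within an iteration, together with the precise form of the forced velocity update from Definition~\ref{modified}. Recall that in any iteration the particles move one after another in the fixed order $1,\ldots,N$, and that particle $n$ has its velocity update in dimension $d$ forced exactly when, \emph{at the instant $n$ is about to move}, every particle $n'\in\mathcal{S}$ --- including $n$ itself --- satisfies $|V_d^{n'}|+|G-X_d^{n'}|<\delta$. The one elementary fact I would isolate first is that a particle's position and velocity in dimension $d$ change only during that particle's own move; hence the contribution of particle $n$ stays equal to $\phi_d^{n,i+1}$ throughout the entire window from just after $n$'s move in iteration $i$ until just before $n$'s move in iteration $i+1$.

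For the first assertion, I would fix $n'\in\mathcal{S}$ with $n'>n$ and look at the move of $n'$ in iteration $i$. Since $n<n'$, particle $n$ has already executed its move of iteration $i$ by then, so at that instant the contribution of $n$ equals $\phi_d^{n,i+1}$, which is $\geq\delta$ by hypothesis. Thus the universally quantified condition of the forced update is violated (by the witness $n$), so $n'$ performs a regular update in dimension $d$, i.e., $F_{i,n',d}=0$.

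For the second assertion, I would fix $n'\in\mathcal{S}$ with $n'\leq n$ and look at the move of $n'$ in iteration $i+1$. Since $n'\leq n$, particle $n$ has not yet moved in iteration $i+1$ at that instant (when $n'=n$ we are simply looking at $n$ itself right before it moves), so its current contribution is still $\phi_d^{n,i+1}\geq\delta$. Again the forcing condition is violated by the witness $n$, hence $F_{i+1,n',d}=0$.

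I do not expect a real obstacle here: the statement follows once one is careful about which snapshot of the swarm state a given move sees. The only point deserving attention is the boundary case $n'=n$ in the second assertion, where the violating particle coincides with the moving particle --- this is harmless because the forcing condition quantifies over \emph{all} particles, the mover included. Intuitively, the lemma says that once $n$'s contribution has jumped above $\delta$, the ``stale'' information that it used to be below $\delta$ can no longer trigger a forced move: all later moves of iteration $i$ and all moves of iteration $i+1$ up to and including $n$'s are blocked, which is exactly the window during which $n$'s contribution is known to be $\geq\delta$.
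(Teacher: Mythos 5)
Your proposal is correct and is essentially the paper's own argument, just spelled out more carefully: the paper's one-sentence proof likewise observes that (with the attractors held constant) particle $n$'s contribution cannot drop below $\delta$ before its next move in iteration $i+1$, so every move in the intervening window sees the witness $n$ with $\phi_d^{n}\geq\delta$ and the forcing condition fails. Your explicit handling of the window boundaries, including the case $n'=n$, is a faithful elaboration of the same idea.
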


\begin{proof}
  With the attractors being constant, the only way for particle $n$ to have
  contribution $\phi_d^n$ 
  less than $\delta$
  is to make a swarm move that reduces the
  sum of the velocity and the distance to the global attractor to less than $\delta$.
  The earliest time this is possible is in the next iteration.
\end{proof}

By Lemma \ref{lem:stop}, we get the length of the lockout phase
as $N$, the number of particles.

For the forced phase, we have to look at the probability that
for a particle $n$ with a forced move in dimension $d$,
$\phi_d^n<\delta$ still applies in the next iteration.
\begin{lem}
  \label{lem:prob}
  Let $n\in \mathcal{S}, d\in\mathcal{D}$. Then for $n<N:\P [F_{i,n+1,d} = 1 \cond F_{i,n,d} = 1] = \frac{1}{2}$ and \\$\P [F_{i+1,1,d} = 1 \cond F_{i,N,d} = 1] = \frac{1}{2}$
\end{lem}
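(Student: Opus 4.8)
The plan is to reduce both equalities to one elementary computation: after conditioning on the event $\{F_{i,n,d}=1\}$, whether the \emph{next} dimension-$d$ velocity update (that of particle $n+1$ in iteration $i$, or of particle $1$ in iteration $i+1$ when $n=N$) is again forced depends \emph{only} on the contribution $\phi_d^n$ of particle $n$ right after its forced move. The first step is an ``only particle $n$ changed'' observation: the forcing test in dimension $d$ reads $\phi_d^{n'} = |V_d^{n'}|+|G-X_d^{n'}| < \delta$ for all $n'\in\mathcal{S}$, and between the moment it is evaluated for particle $n$ and the moment it is evaluated for the following particle, the only dimension-$d$ quantities that change are $V_d^n$ and $X_d^n$ --- the intervening updates touch other dimensions of particle $n$ and lower-indexed dimensions of the next particle, and the attractors are constant throughout the pulsation phase; when $n=N$ this is the same statement applied across the iteration boundary, since particle $N$'s move is the last dimension-$d$ update of iteration $i$ and particle $1$'s is the first of iteration $i+1$. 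Since $\{F_{i,n,d}=1\}$ already guarantees $\phi_d^{n'}<\delta$ for \emph{every} $n'$ at that moment, the subsequent test holds if and only if particle $n$'s post-move contribution is below $\delta$.

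It remains to compute this conditional probability. Condition additionally on the pre-move state of particle $n$ and set $g := G - X_d^n$; the hypothesis $F_{i,n,d}=1$ gives $|V_d^n| + |g| < \delta$, hence $|g|<\delta$. The forced move replaces the velocity by $v := (2t-1)\delta$, which is uniform on $[-\delta,\delta]$ and independent of everything determined before the move (in particular of $g$ and of the conditioning event), and replaces the position by $X_d^n + v$, so the new contribution equals $|v|+|g-v|$. One checks that $\{\,v : |v|+|g-v| < \delta\,\}$ is exactly the interval $\bigl(\tfrac{g-\delta}{2},\,\tfrac{g+\delta}{2}\bigr)$, which has length $\delta$ and, because $|g|<\delta$, lies inside $[-\delta,\delta]$; hence $\P[\phi_d^{n}<\delta] = \delta/(2\delta) = \tfrac12$, independently of $g$. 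Averaging over the pre-move state (the bound $|g|<\delta$ holds in every case) leaves the value unchanged, which together with the first step yields $\P[F_{i,n+1,d}=1\mid F_{i,n,d}=1]=\tfrac12$ for $n<N$ and $\P[F_{i+1,1,d}=1\mid F_{i,N,d}=1]=\tfrac12$.

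The only delicate point is the bookkeeping in the first step: one must be sure that no other particle's dimension-$d$ position or velocity is altered between two consecutive dimension-$d$ forcing tests, so that the conditional event genuinely collapses to the single-particle condition $\phi_d^n<\delta$. The probabilistic content is then the one-line length computation, and it is precisely this factor $\tfrac12$ --- the chance that a just-forced particle ``escapes'' back above $\delta$ --- that makes the per-particle forcing frequency decay with increasing $N$, consistent with Fig.~\ref{fig:forced_dimensions_particle}.
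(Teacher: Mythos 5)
Your proof is correct and follows the same overall strategy as the paper: condition on $F_{i,n,d}=1$, observe that only particle $n$'s dimension-$d$ contribution changes before the next forcing test, and then compute the probability that the post-forced-move contribution $|v|+|g-v|$ (with $v$ uniform on $[-\delta,\delta]$ and $g=G_d-X_d^n$) stays below $\delta$. Where you differ is in the final computation: the paper evaluates this probability by splitting into six sign cases for the orderings of $v$, $g$, and $0$, computes each conditional probability separately, and sums them to $\tfrac12$; you instead identify the sublevel set $\{v:|v|+|g-v|<\delta\}$ directly as the interval $\bigl(\tfrac{g-\delta}{2},\tfrac{g+\delta}{2}\bigr)$ of length $\delta$, note that $|g|<\delta$ forces it inside $[-\delta,\delta]$, and read off $\delta/(2\delta)=\tfrac12$ independently of $g$. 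Your route is shorter, makes the independence from $g$ transparent (which the paper's case sum only reveals after cancellation), and also makes explicit the bookkeeping reduction --- that the next particle's forcing test collapses to the single condition on particle $n$'s new contribution --- which the paper uses without comment in its first equality. Both arguments rely on the same two facts (constant attractors during the pulsation phase, and the forced velocity being uniform on $[-\delta,\delta]$ and independent of the pre-move state), so nothing is lost; your version is simply a cleaner evaluation of the same integral.
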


\begin{proof}
As each dimension is independent and the same calculations are performed
in each dimension, we may restrict our analysis to one dimension $d$
and omit the index $d$ in the following proofs.

  W.\,l.\,o.\,g., let $n=1$.
Let $\GminusX=G-X^{1,i}$ the (signed) distance of particle $1$ to the
fixed global attractor. Hence, $\phi^{1,i}=|V^{1,i}|+|\GminusX|$.
 \begin{align*}
\MoveEqLeft[1] \P \left[F_{i,2} = 1 \cond F_{i,1} = 1\right] 
    =\P \left[\phi^{1,i-1} < \delta \cond[\middle] F_{i,1} = 1\right]\\
    &\begin{alignedat}{2}
      =\,\,\,\,  &\P \left[\phi^{1,i+1} < \delta \cond[\middle]\GminusX \geq 0  \wedge F_{i,1} = 1\right]
                 &&\cdot \P \left[  \GminusX \geq 0 \cond[\middle] F_{i,1} = 1\right]\\
       + \,       &\P \left[\phi^{1,i+1} < \delta       \cond[\middle]\GminusX < 0                                  \wedge F_{i,1} = 1\right] &&\cdot \P \left[  \GminusX< 0                                            \cond[\middle] F_{i,1} = 1\right]\B\\
     \end{alignedat}\\
    &\begin{alignedat}{2}
       =\,\,\,\,  &\P \left[\phi^{1,i+1} < \delta       \cond[\middle](V^{1,i+1} \geq \GminusX \geq 0)            \wedge F_{i,1} = 1\right] &&\cdot \P \left[  V^{1,i+1}     \geq \GminusX       \geq 0           \cond[\middle] F_{i,1} = 1\right]\\
       + \,       &\P \left[\phi^{1,i+1} < \delta       \cond[\middle](\GminusX\geq V^{1,i+1} \geq 0)            \wedge F_{i,1} = 1\right] &&\cdot \P \left[  \GminusX   \geq V^{1,i+1}         \geq 0           \cond[\middle] F_{i,1} = 1\right]\\
       + \,       &\P \left[\phi^{1,i+1} < \delta       \cond[\middle](\GminusX \geq 0 \geq V^{1,i+1} )           \wedge F_{i,1} = 1\right] &&\cdot \P \left[  \GminusX   \geq 0 \geq V^{1,i+1}                   \cond[\middle] F_{i,1} = 1\right]\\
       + \,       &\P \left[\phi^{1,i+1} < \delta       \cond[\middle](\GminusX \leq V^{1,i+1} < 0)               \wedge F_{i,1} = 1\right] &&\cdot \P \left[  \GminusX   \leq V^{1,i+1}         < 0              \cond[\middle] F_{i,1} = 1\right]\\
       + \,       &\P \left[\phi^{1,i+1} < \delta       \cond[\middle](V^{1,i+1} \leq \GminusX < 0)               \wedge F_{i,1} = 1\right] &&\cdot \P \left[  V^{1,i+1}     \leq \GminusX       < 0              \cond[\middle] F_{i,1} = 1\right]\\
       + \,       &\P \left[\phi^{1,i+1} < \delta       \cond[\middle](\GminusX \leq 0 \leq V^{1,i+1})            \wedge F_{i,1} = 1\right] &&\cdot \P \left[  \GminusX   \leq 0 \leq V^{1,i+1}                   \cond[\middle] F_{i,1} = 1\right]\B\\
     \end{alignedat}\\
    &\begin{alignedat}{2}
       =\,\,\,\,  &\P \left[2 \cdot V^{1,i+1} - \GminusX < \delta   \cond[\middle](V^{1,i+1} \geq \GminusX \geq 0)            \wedge F_{i,1} = 1\right] &&\cdot \P \left[  V^{1,i+1}     \geq \GminusX       \geq 0           \cond[\middle] F_{i,1} = 1\right]\\
       + \,       &\P \left[\GminusX < \delta                           \cond[\middle](\GminusX \geq V^{1,i+1} \geq 0)            \wedge F_{i,1} = 1\right] &&\cdot \P \left[  \GminusX   \geq V^{1,i+1}         \geq 0           \cond[\middle] F_{i,1} = 1\right]\\
       + \,       &\P \left[-\GminusX < \delta                        \cond[\middle](\GminusX \leq V^{1,i+1} < 0)               \wedge F_{i,1} = 1\right] &&\cdot \P \left[  \GminusX   \leq V^{1,i+1}         < 0              \cond[\middle] F_{i,1} = 1\right]\\
       + \,       &\P \left[\GminusX - 2 \cdot V^{1,i+1} < \delta   \cond[\middle](V^{1,i+1} \leq \GminusX < 0)               \wedge F_{i,1} = 1\right] &&\cdot \P \left[  V^{1,i+1}     \leq \GminusX       < 0              \cond[\middle] F_{i,1} = 1\right]\\
       + \,       &\P \left[\GminusX - 2 \cdot V^{1,i+1} < \delta   \cond[\middle](\GminusX \geq 0 \geq V^{1,i+1})            \wedge F_{i,1} = 1\right] &&\cdot \P \left[  \GminusX   \geq 0 \geq V^{1,i+1}                   \cond[\middle] F_{i,1} = 1\right]\\
       + \,       &\P \left[2 \cdot V^{1,i+1} - \GminusX < \delta   \cond[\middle](\GminusX \leq 0 \leq V^{1,i+1})            \wedge F_{i,1} = 1\right] &&\cdot \P \left[  \GminusX   \leq 0 \leq V^{1,i+1}                   \cond[\middle] F_{i,1} = 1\right]\B
     \end{alignedat}\\
    &\begin{aligned}
      = \frac{1}{2} &\cdot 
          \Bigg(\frac{1}{2} \cdot \frac{\delta - |\GminusX|}{2 \cdot \delta} + 1 \cdot \frac{|\GminusX|}{2 \cdot\delta} +  1 \cdot \frac{|\GminusX|}{2 \cdot\delta}
          + \frac{1}{2} \cdot \frac{\delta - |\GminusX|}{2 \cdot \delta} + \frac{\delta -|\GminusX|}{2\cdot\delta} \cdot \frac{1}{2} + \frac{\delta - |\GminusX|}{2\cdot\delta} \cdot \frac{1}{2}\Bigg)
          &\\
      = \frac{1}{2} &&\qedhere
    \end{aligned}
  \end{align*}
\end{proof}

With this probability we can now analyze the length of the forced phase.
\begin{lem}
 Given $F_{i,n,d}=1$, define $Y$ as the number of consecutive particle steps, during which dimension $d$ of the respective particle is forced, starting with particle $n$ at iteration $i$. Then for $k\ge 1$ holds $\P[Y=k]=\frac{1}{2^{k-1}}$. \label{lem:row}
\end{lem}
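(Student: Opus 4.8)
The plan is to realize $Y$ as a geometric random variable: Lemma~\ref{lem:stop} pins down \emph{how} a forced run can terminate, and Lemma~\ref{lem:prob} — more precisely, the case analysis inside its proof — shows that at each step the run continues with probability exactly $\tfrac12$, independently of the past. Fix the dimension $d$ and suppress it from the notation, exactly as in the proof of Lemma~\ref{lem:prob}. List the particle moves executed from $(i,n)$ onward, in execution order, as $m_1=(i,n),m_2,m_3,\dots$, where the move of particle $N$ in an iteration is followed by the move of particle $1$ in the next iteration. For $\ell\ge 1$ let $B_\ell$ be the event that move $m_\ell$ is forced; by hypothesis $B_1$ holds, and since the counted steps are consecutive, $\{Y\ge k\}=B_1\cap\dots\cap B_k$. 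Hence it suffices to compute $\P[Y\ge k\mid B_1]$.

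\textbf{Step 1: a forced run can be broken only by an ``overshoot''.} Suppose $B_1\cap\dots\cap B_\ell$ holds. Then $m_1,\dots,m_\ell$ are forced, and each of $m_1,\dots,m_{\ell-1}$ must have left its particle's contribution below $\delta$ — otherwise Lemma~\ref{lem:stop} would force the immediately following move to be unforced, contradicting $B_2\cap\dots\cap B_\ell$. Consequently, just before $m_{\ell+1}$ is executed, every particle that has not moved since $m_1$ still has the contribution it had before $m_1$ (which was $<\delta$, because $m_1$ was forced), and every particle that has moved did so by a forced move that, with the possible exception of $m_\ell$, kept its contribution $<\delta$. Therefore $B_{\ell+1}$ holds if and only if the forced move $m_\ell$ leaves its particle's contribution below $\delta$.

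\textbf{Step 2: each continuation is an independent fair coin flip.} Let $p$ be the particle performing $m_\ell$, and let $\GminusX=G-X^{p}$ be its signed offset from the global attractor just before $m_\ell$; since $m_\ell$ is forced, $|\GminusX|\le\phi^{p}<\delta$. The forced move assigns $V^{p}:=(2t-1)\delta$ with $t$ drawn uniformly from $[0,1]$ afresh and independently of the entire past, after which $\phi^{p}$ becomes $|V^{p}|+|\GminusX-V^{p}|$. The case analysis in the proof of Lemma~\ref{lem:prob} shows that for \emph{every} fixed $\GminusX$ with $|\GminusX|<\delta$ the probability over $t$ that $|V^{p}|+|\GminusX-V^{p}|<\delta$ equals $\tfrac12$. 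Since the only feature of the history $B_1\cap\dots\cap B_\ell$ relevant to $m_{\ell+1}$ is the value of $\GminusX$, and that value does not affect this probability, we obtain $\P[B_{\ell+1}\mid B_1\cap\dots\cap B_\ell]=\tfrac12$ for all $\ell\ge 1$. (The two cases of Lemma~\ref{lem:prob}, the within-iteration transition $n\to n+1$ and the wrap-around $N\to 1$, are exactly the two possibilities for the consecutive pair $(m_\ell,m_{\ell+1})$.)

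\textbf{Step 3 and the main obstacle.} The chain rule now gives
\[
\P[Y\ge k\mid B_1]=\prod_{\ell=1}^{k-1}\P[B_{\ell+1}\mid B_1\cap\dots\cap B_\ell]=\frac{1}{2^{k-1}}\qquad(k\ge 1),
\]
i.e., $Y$ is geometrically distributed with parameter $\tfrac12$ (so $\P[Y=k]=\P[Y\ge k]-\P[Y\ge k+1]=2^{-k}$), which is the asserted behaviour. The one genuinely delicate point — and where I would take the most care — is the passage in Step 2 from the \emph{two-step} conditional probability stated in Lemma~\ref{lem:prob} to conditioning on the \emph{whole} run $B_1\cap\dots\cap B_\ell$: this Markov-type independence is precisely what the observation ``the value $\tfrac12$ is attained for every admissible offset $\GminusX$'' buys us, so that no part of the history can bias the next coin flip.
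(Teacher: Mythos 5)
Your proof is correct and follows essentially the same route as the paper's: a chain-rule/induction argument over the consecutive moves, using Lemma~\ref{lem:prob} to supply the $\tfrac12$ continuation probability at each step, with your Step~2 merely making explicit the Markov-type independence that the paper's proof asserts implicitly when it conditions on the whole run $\bigwedge_{v=0}^{k-2}F=1$ rather than on a single preceding forced move. The only point worth flagging is that what you compute is $\P[Y\ge k]=2^{-(k-1)}$, and the paper's proof computes exactly the same quantity $\P\bigl[\bigwedge_{v=0}^{k-1}F=1\bigr]$ while labelling it $\P[Y=k]$, so the lemma's statement should properly read $\P[Y\ge k]$; your parenthetical $\P[Y=k]=2^{-k}$ is the honest pointwise distribution.
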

\begin{proof}
Again, we omit the dimension $d$ from the variables.

  By Lemma \ref{lem:stop}, we know that if for one particle $|V^{n,i+1}| + |G - X^{n,i+1}| \geq \delta$ holds, the next particle cannot be
forced and with Lemma \ref{lem:prob}, $\P [F_{i,n+1} = 1 | F_{i,n} = 1] = \frac{1}{2}$.
  By assumption, $\P[F_{i,n}=1] = 1$.
  By induction on $k$, we have:
  \begin{align*}
    \P[Y=2]=&\P\left[\bigwedge_{v=0}^{2-1} F_{i+\lfloor\frac{n+v}{N}\rfloor,n+v\bmod N}=1\right]\\
           =&\P\left[F_{i+\lfloor\frac{n+1}{N}\rfloor,n+1 \bmod N}=1 \cond[\middle] F_{i,n}=1\right] \cdot \P\left[F_{i,n}=1\right]\\
           =&\P\left[F_{i+\lfloor\frac{n+1}{N}\rfloor,n+1\bmod N}=1 \cond[\middle] F_{i,n}=1\right] = \frac{1}{2} = \frac{1}{2^{2-1}}\\
    \P[Y=k]=&\P\left[\bigwedge_{v=0}^{k-1} F_{i+\lfloor\frac{n+v}{N}\rfloor,n+v\bmod N}=1\right]\\
    =&\P\left[F_{i+\lfloor\frac{n+(k-1)}{N}\rfloor,n+(k-1)\bmod N}=1 \cond[\middle] \bigwedge_{v=0}^{k-2} F_{i+\lfloor\frac{n+v}{N}\rfloor,n+v\bmod N}=1\right]\\
     &\cdot \P\left[\bigwedge_{v=0}^{k-2} F_{i+\lfloor\frac{n+v}{N}\rfloor,n+v\bmod N}=1\right]\\
           = &\,\frac{1}{2} \cdot \frac{1}{2^{k-2}} = \frac{1}{2^{k-1}}
  \end{align*}
\end{proof}
Finally for the recovery phase we cannot give an exact length, as it depends on the positions of the particles in the search space.
The following Lemma~\ref{lem:first_unforced} and Fig.~\ref{fig:dist} however give us at least an idea of the behavior of the particle during this phase. 
\begin{lem}
  The probability for a particle $n$ to get a partial potential $\phi_d^n$ greater than $\delta$ when not forced for the first time after a chain of forced moves is
$$\P\left[ 
\phi^{n,i+1} \geq \delta \cond[\middle] F_{i,n,d} = 0 \wedge F_{i-1,n,d} = 1\right] \geq \frac{1}{2}\cdot\Big(1 - \frac{1}{2\chi}\Big)\enspace.
$$\label{lem:first_unforced}
\end{lem}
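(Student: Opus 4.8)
The plan is to condition on the entire computation up to the beginning of iteration $i$ and, as in the proof of Lemma~\ref{lem:prob}, to restrict attention to the single dimension $d$, which I suppress from the notation. From $F_{i-1,n,d}=1$ we know that the velocity $w:=V^{n,i}$ of particle $n$ was produced by a forced update, so $w$ is uniform on $[-\delta,\delta]$ and, importantly, independent of the fresh coins $r,s$ used in particle $n$'s (regular) move in iteration $i$; moreover the pre-force position satisfies $|G-X^{n,i-1}|<\delta$. Writing $b:=G-X^{n,i-1}$ (so $|b|<\delta$) and $a:=G-X^{n,i}=b-w$, and using the standing assumption of this subsection that the attractors are constant and, in the pulsation phase, essentially coincident (so that $L^n-X^{n,i}\approx a$), the regular update in iteration $i$ produces
\[
V^{n,i+1}=\chi w+(c_1 r+c_2 s)\,a=\chi w+\mu a,\qquad \phi^{n,i+1}=|V^{n,i+1}|+|a-V^{n,i+1}|,
\]
where $\mu:=c_1 r+c_2 s$ is independent of $w$.

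First I would reduce to the extremal configuration. Since $\phi^{n,i+1}$ is a sum of two absolute values, $\phi^{n,i+1}\ge|a|$, so the only regime where the bound is in danger is $|a|<\delta$; within it a short sign analysis (substituting $a=b-w$) yields $\phi^{n,i+1}=|w|\cdot g(\mu)+\rho$, where $g(\mu):=\max\{1,\,|2\mu-1-2\chi|\}\ge 1$ and $\rho\ge 0$ is a term that vanishes when $b=0$ and grows with $|b|$. Hence the minimum over $b\in(-\delta,\delta)$ is attained at $b=0$, and it suffices to prove the bound there, where $\phi^{n,i+1}=|w|\,g(\mu)$ exactly.

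Now I would compute. Since $w$ and $\mu$ are independent, $|w|$ is uniform on $[0,\delta]$, and $g(\mu)\ge 1$,
\[
\P\!\bigl[\phi^{n,i+1}\ge\delta\bigr]=\P\!\Bigl[|w|\ge\tfrac{\delta}{g(\mu)}\Bigr]=\E_\mu\!\Bigl[1-\tfrac{1}{g(\mu)}\Bigr]=1-\E_\mu\!\Bigl[\tfrac{1}{g(\mu)}\Bigr].
\]
It remains to show $\E_\mu[1/g(\mu)]\le\tfrac12+\tfrac1{4\chi}$. I would do this by splitting $g$ into its three linear pieces ($g(\mu)=1+2\chi-2\mu$ for $\mu\le\chi$; $g(\mu)=1$ for $\chi\le\mu\le 1+\chi$; $g(\mu)=2\mu-1-2\chi$ for $\mu\ge 1+\chi$) and integrating $1/g$ against the (triangular) density of $\mu=c_1 r+c_2 s$; the middle piece contributes exactly $\P[\chi\le\mu\le 1+\chi]$, the two outer pieces contribute logarithmic terms, and one checks that the sum stays below $\tfrac12+\tfrac1{4\chi}$. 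This gives $\P[\phi^{n,i+1}\ge\delta]\ge\tfrac12\bigl(1-\tfrac1{2\chi}\bigr)$, as claimed. Heuristically, $1-\tfrac1{2\chi}=\P[\,|\chi w|>\delta/2\,]$ is the contribution of the forced velocity clearing half the threshold, and the extra factor $\tfrac12$ is the price paid for the coin $\mu$ possibly landing in the ``cancelling'' window $[\chi,1+\chi]$, where $g(\mu)=1$.

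The main obstacle is the reduction carried out in the first two steps. The attractor displacement $c_1 r\,(L^n-X^{n,i})$ is not bounded a priori by a multiple of $\delta$, so one must either justify replacing $L^n-X^{n,i}$ by $a$ (legitimate in the pulsation phase, where the local attractor has settled near $G$; moreover enlarging $|L^n-G|$ only inflates $|V^{n,i+1}|$ and therefore helps) or carry $|L^n-X^{n,i}|$ through as a parameter and verify that the worst case is $L^n\approx G$; likewise one must confirm that $\rho\ge 0$ and that $b=0$ is really the minimizing offset. The expectation estimate is then routine but slightly tedious, since $g$ is piecewise linear and $\mu$ is triangularly (not uniformly) distributed. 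Finally, one observes that the conditioning event $F_{i,n,d}=0$ is determined by the positions and velocities at the start of iteration $i$, hence is independent of $(r,s)$, so it does not affect the computation above and may simply be dropped.
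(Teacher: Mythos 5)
Your overall strategy --- compute $\phi^{n,i+1}$ exactly in an extremal configuration and then integrate over the coin $\mu=c_1r+c_2s$ --- is genuinely different from the paper's. The paper never evaluates $\phi^{n,i+1}$ exactly: it applies the triangle inequality
$|V^{n,i+1}|+|\GminusX-V^{n,i+1}|\ge|2V^{n,i+1}-\GminusX|=|2\chi V^{n,i}+(2\mu-1)\GminusX|$,
restricts to the probability-$\tfrac12$ event that $V^{n,i}$ and $(2\mu-1)\GminusX$ carry the same sign (on which the modulus is at least $2\chi|V^{n,i}|$), and finishes with $\P\left[2\chi|V^{n,i}|\ge\delta\right]=1-\tfrac1{2\chi}$ from the uniformity of the forced velocity. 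Your route, however, has a genuine gap at the reduction step.

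The claim that $\phi^{n,i+1}=|w|\,g(\mu)+\rho$ with $\rho\ge0$, so that $b=0$ is the minimizing offset, is false. For fixed $w,\mu$ one has
$\phi^{n,i+1}=|(\chi-\mu)w+\mu b|+|(1-\mu)b-(1+\chi-\mu)w|$,
a convex piecewise-linear function of $b$ whose kinks sit at $b=(\mu-\chi)w/\mu$ and $b=(1+\chi-\mu)w/(1-\mu)$, not at $b=0$, so its minimum over $b\in(-\delta,\delta)$ is in general attained away from $0$. Concretely, take $\chi=0.73$, $\mu=2$, $w>0$ and $b=0.27\,w$ (admissible, since $|b|<\delta$): then $a=b-w=-0.73\,w$, $V^{n,i+1}=\chi w+\mu a=-0.73\,w$, and $\phi^{n,i+1}=0.73\,|w|$, whereas $|w|\,g(2)=1.54\,|w|$. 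So $\phi^{n,i+1}$ can even drop below $|w|$, and the identity $\P\left[\phi^{n,i+1}\ge\delta\right]=\E_\mu\left[1-1/g(\mu)\right]$ is valid only at $b=0$; it cannot be transported to general $b$ by the claimed monotonicity. A second, smaller issue is that the concluding inequality $\E_\mu\left[1/g(\mu)\right]\le\tfrac12+\tfrac1{4\chi}$ is asserted (``one checks'') but never carried out; for the standard parameters my own estimate gives roughly $0.82$ against the required $0.84$, so it survives only by a thin, parameter-dependent margin and cannot be waved through. The paper's triangle-inequality argument sidesteps both problems: it needs no case analysis over the offset $b$ and no integration against the triangular law of $\mu$, only the sign symmetry of the forced velocity.
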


\begin{proof}
  By assumption, $L^n=G$.
  Again, $d$ is omitted from the variables.\\
  Applying the PSO movement equations leads to
  \interdisplaylinepenalty=10000
  \begin{align*}
    \phi^{n,i+1} &= |V^{n,i+1}| + |G - X^{n,i+1}| = |V^{n,i+1}| + |\smash{\overbrace{G - X^{n,i}}^{=:\GminusX}} - V^{n,i+1}|\\
    &= |\chi \cdot V^{n,i} + (c_1 \cdot r+ c_2 \cdot s)\cdot\GminusX 
                                                                 | + |\GminusX 
                                                                            - \chi \cdot V^{n,i} - (c_1 \cdot r+ c_2 \cdot s)
\cdot\Delta|\\ 
    &= |\chi \cdot V^{n,i} + (c_1 \cdot r+ c_2 \cdot s)\cdot\GminusX| + |- \chi \cdot V^{n,i} - ((c_1 \cdot r+ c_2 \cdot s) - 1)\cdot\GminusX|\\
    &= |\chi \cdot V^{n,i} + (c_1 \cdot r+ c_2 \cdot s)\cdot\GminusX| + |\chi \cdot V^{n,i} + ((c_1 \cdot r+ c_2 \cdot s) - 1)\cdot\GminusX|\\
    &\geq |\chi \cdot V^{n,i} + (c_1 \cdot r+ c_2 \cdot s)\cdot\GminusX + \chi \cdot V^{n,i} + ((c_1 \cdot r+ c_2 \cdot s) - 1)\cdot\GminusX|\\
    &= |2 \cdot \chi \cdot V^{n,i} + (2\cdot (c_1 \cdot r+ c_2 \cdot s) - 1)\cdot\GminusX|
  \end{align*}
    \allowdisplaybreaks
Hence,
  \begin{align*}
    \MoveEqLeft[1] \P\left[\phi^{n,i+1} 
            \geq \delta \cond[\middle] F_{i,n} = 0 \wedge F_{i-1,n} = 1\right] \\
    &\geq \P\left[|2\chi \cdot V^{n,i} + (2\cdot (c_1 \cdot r+ c_2 \cdot s) - 1)\cdot\GminusX| \geq \delta \cond[\middle] F_{i-1,n} = 1\right]\\
    &=\P\left[|2\chi \cdot V^{n,i} + (2\cdot (c_1 \cdot r+ c_2 \cdot s) - 1)\cdot\GminusX| \geq \delta \cond[\middle]
       \begin{aligned}
         &V^{n,i} \cdot (2\cdot (c_1 \cdot r+ c_2 \cdot s) - 1)\cdot\GminusX \geq 0 \\
         &\wedge F_{i-1,n} = 1
       \end{aligned}
           \right]\\
    &\qquad\cdot\P\left[V^{n,i}\cdot(2\cdot (c_1 \cdot r+ c_2 \cdot s) - 1)\cdot\GminusX \geq 0 \cond[\middle] F_{i-1,n} = 1\right]\\
    &\quad\!+\P\left[|2\chi \cdot V^{n,i} + (2\cdot (c_1 \cdot r+ c_2 \cdot s) - 1)\cdot\GminusX| \geq \delta \cond[\middle]
       \begin{aligned}
         &V^{n,i}\cdot(2\cdot (c_1 \cdot r+ c_2 \cdot s) - 1)\cdot(G-x^{n,i}) < 0 \\
         &\wedge F_{i-1,n} = 1
       \end{aligned}
       \right]\\
    &\qquad\cdot\P\left[V^{n,i}\cdot(2\cdot (c_1 \cdot r+ c_2 \cdot s) - 1)\cdot\GminusX < 0 \cond[\middle] F_{i-1,n} = 1\right]\\
    &\geq \P\left[|2\chi \cdot V^{n,i}| \geq \delta \cond[\middle] V^{n,i}\cdot(2\cdot (c_1 \cdot r+ c_2 \cdot s) - 1)\cdot\GminusX \geq 0 \wedge F_{i-1,n} = 1\right]\\
         &\qquad\cdot\P\left[V^{n,i}\cdot(2\cdot (c_1 \cdot r+ c_2 \cdot s) - 1)\cdot\GminusX \geq 0 \cond[\middle] F_{i-1,n} = 1\right]\\
    &\geq\underbrace{\P\left[2\chi \cdot|V^{n,i}| \geq \delta \cond[\middle] F_{i-1,n} = 1\right]}_{=:\alpha} \cdot
           \underbrace{\P\left[V^{n,i}\cdot(2\cdot (c_1 \cdot r+ c_2 \cdot s) - 1)\cdot\GminusX \geq 0\right]}_{=:\beta}\geq\Big(1 - \frac{1}{2\chi}\Big)\cdot\frac{1}{2}
  \end{align*}
  Under the assumption $F_{i-1,n} = 1$, we have
  $V^{n,i}\sim\mathcal{U}(-\delta,\delta)$ and
  $|V^{n,i}| \sim \mathcal{U}(0,\delta)$ ($\mathcal{U}$ denotes the
  continuous uniform distribution).
  Therefore we get
  $\alpha\geq 1-\frac{1}{2\chi}$ and can derive $\beta = \frac{1}{2}$ by the following equations.
  \interdisplaylinepenalty=10000
  \begin{align*}
    \MoveEqLeft \P\left[V^{n,i}\cdot(2\cdot (c_1 \cdot r+ c_2 \cdot s) - 1)\cdot\GminusX \geq 0\right]\\
    &=\P\left[V^{n,i}\cdot(2\cdot (c_1 \cdot r+ c_2 \cdot s) - 1)\cdot\GminusX \geq 0 \cond[\middle] V^{n,i} \geq 0\right ] \cdot \P\left[ V^{n,i} \geq 0 \right]\\
    &\qquad+\P\left[V^{n,i}\cdot(2\cdot (c_1 \cdot r+ c_2 \cdot s) - 1)\cdot\GminusX \geq 0 \cond[\middle] V^{n,i} < 0\right ] \cdot \P\left[V^{n,i} < 0\right]\\
    &=\frac{1}{2}\cdot \left(\P\left[(2\cdot (c_1 \cdot r+ c_2 \cdot s) - 1)\cdot\GminusX \geq 0\right] + \P\left[(2\cdot (c_1 \cdot r+ c_2 \cdot s) - 1)\cdot\GminusX < 0\right]\right)\\
    &=\frac{1}{2}
  \end{align*}
  \allowdisplaybreaks
\end{proof}
Note that the lower bound on the probability in Lemma~\ref{lem:first_unforced} is negative for
$\chi < \frac{1}{2}$. The actual probability depends on the position in the search space. With 
$\chi > \frac{1}{2}$, the probability for a particle $n$ to get partial potential $\phi^n>\delta$ is positive independent of the position of the particle. Therefore, the length of the recovery phase will decrease drastically 
with smaller $\chi$ values.
\begin{figure}
  \centering
  \scalebox{.9}{\input{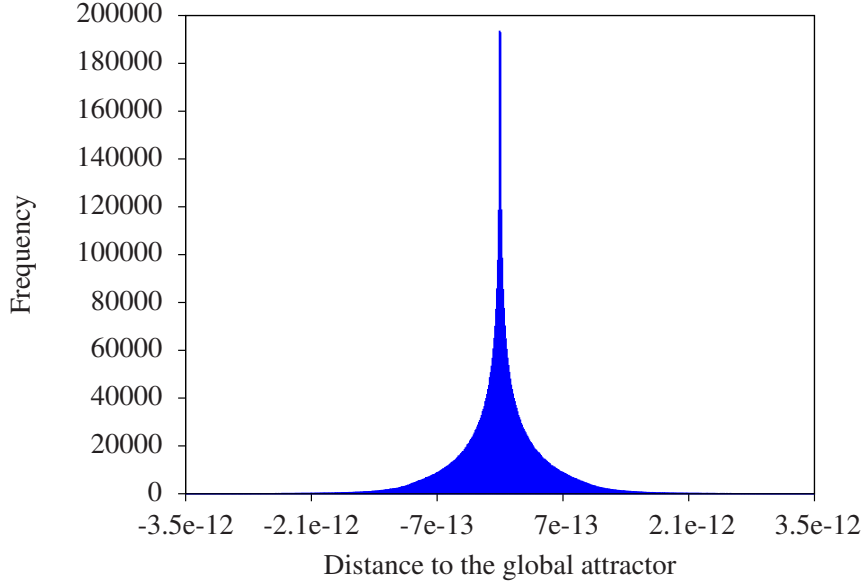}}
  \caption{Frequency of the distance to the global attractor of one particle in one dimension each iteration for $5\,000\,000$ iterations in a run with $5$ particles and $15$ dimensions and $\delta = 10^{-7}$ initialized at the global optimum.}
  \label{fig:dist}
\end{figure}
We may
conclude that for a given dimension we get the following behavior in the pulsation phase.

\begin{sat}
  \label{sat:stag}
  Let $\cal S$ be a swarm of particles at a (local) optimum of an arbitrary function $f$, i.\,e., $\forall n \in {\cal S}: L^n = G = {\rm Opt}(f)$ with ${\rm Opt}(f)$ being a (local) optimum of the function $f$.
  The swarm repeats three phases independently for each dimension:
\begin{itemize}
\item the forced phase of length $k \ge 1$ with probability $\frac{1}{2^{k-1}}$,
\item the lockout phase of length $N$,
\item and the recovery phase of length $\ell \ge 0$
\end{itemize}
  leading to a stagnation of $\sigma(I)/|I|\to\partial\sigma$ at a (local) optimum depending on $N$ and $D$.
\end{sat}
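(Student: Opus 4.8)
The plan is to obtain Theorem~\ref{sat:stag} by synthesising Lemmas~\ref{lem:stop}--\ref{lem:first_unforced}, the conceptual core being that, once the attractors are frozen at the optimum, the per‑dimension dynamics become a time‑homogeneous regenerative process whose long‑run forcing rate depends only on $N$ (and, through summation, on $D$). First I would record two reductions. \textbf{(a) No $f$‑dependence.} Since $L^n=G={\rm Opt}(f)$ for every particle and any position subsequently visited can at best tie, never strictly beat, a (strict) local optimum, the local and global attractors stay constant for the whole run; this is exactly the standing convention $G_d^i=G$, $L^n=G$ already used in Lemmas~\ref{lem:stop}--\ref{lem:first_unforced}, and it makes the objective function irrelevant from here on. \textbf{(b) Dimension decoupling.} The predicate triggering a forced move in dimension $d$ and the regular velocity update in dimension $d$ read and write only the $d$‑th coordinates (the random numbers $r,s,t$ being drawn per move and per dimension), so the $D$ coordinate processes $\big(V_d^{n,i},\,G-X_d^{n,i}\big)_{n\in\mathcal S}$ evolve independently and identically; hence $\sigma(I)=\sum_{d=1}^D\sigma(I,d)$ with i.i.d.\ summands, and it suffices to analyse one fixed dimension $d$, which I drop from the notation.

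Next I would carve the single‑dimension trajectory — indexed by particle‑moves, wrapping across iterations exactly as in Lemma~\ref{lem:row} — into the three announced phases. A \emph{forced phase} is a maximal block of consecutive forced particle‑moves: by Lemma~\ref{lem:prob} the forcing indicator survives one further particle‑move with probability exactly $\tfrac12$, so by Lemma~\ref{lem:row} the forced‑phase length $Y$ satisfies $\P[Y=k]=2^{-(k-1)}$ for $k\ge1$, which is the first bullet. A forced phase terminates because some particle $n^\ast$ attains $\phi^{n^\ast,i+1}\ge\delta$; Lemma~\ref{lem:stop} then guarantees that the remaining $N-n^\ast$ moves of iteration $i$ together with the first $n^\ast$ moves of iteration $i+1$ — exactly $N$ particle‑moves — cannot be forced, which is the \emph{lockout phase} of length $N$, the second bullet. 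After the lockout the particles resume regular updates and re‑contract towards $G$; this \emph{recovery phase} has some length $\ell\ge0$ and ends at the first particle‑move at which all contributions $\phi^{n'}$ are again below $\delta$, at which instant a new forced phase begins and the cycle repeats. Lemma~\ref{lem:first_unforced} supplies the positive lower bound $\tfrac12\big(1-\tfrac1{2\chi}\big)>0$ on a freshly unforced particle leaving the forced regime, which (for $\chi>\tfrac12$, in particular the standard value) is what keeps the recovery phase from being degenerate.

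For the limiting statement I would view the coordinate process $\big(V^{n,i},G-X^{n,i}\big)_{n\in\mathcal S}$ as a time‑homogeneous Markov chain on the effective (near‑$G$, hence bounded) state space, in which the start of each forced phase serves as a regeneration epoch: at such a moment the particle about to be forced has its velocity redrawn uniformly from $[-\delta,\delta]$ and every position lies within $O(\delta)$ of $G$, so successive cycle‑start states form a sub‑Markov chain governed by one fixed kernel. Given that the recovery phase is almost surely finite with finite expected length, the cycle length $T$ has finite positive mean, and the renewal–reward theorem (equivalently the ergodic theorem for the positive Harris‑recurrent coordinate chain) yields $\sigma(I,d)/|I|\to\partial\sigma_d:=\E[\text{number of forced moves per cycle}]/\E[T]$ almost surely. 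Summing over the $D$ independent coordinates gives $\sigma(I)/|I|\to\partial\sigma=D\cdot\partial\sigma_d$. The two quantities building $\partial\sigma_d$ involve the forced‑phase mean $\sum_{k\ge1}k\,2^{-(k-1)}$, the lockout length $N$, and the $N$‑dependent recovery statistics, so $\partial\sigma$ depends on $N$ and $D$ only; by reduction~(a) it does not depend on $f$, and by the $\delta$‑scaling invariance already noted in Section~\ref{subsec:ExpSphere} (rescaling $\delta$ rescales both the forcing threshold and the injected velocity, leaving the dimensionless dynamics unchanged) it does not depend on $\delta$ either.

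I expect the genuine obstacle to be the almost‑sure finiteness of the recovery phase, and with it the positive recurrence needed to license the ergodic theorem: the recovery‑phase length really does depend on the continuous particle positions, which are not reset exactly, so one cannot invoke literal i.i.d.\ renewals and must instead establish a drift/Lyapunov condition for the continuous‑state coordinate chain. The natural ingredients are the known stability of the PSO iteration for the chosen parameters $\chi=0.72984$, $c_1=c_2=1.49617$ (the expected regular‑update map is a contraction towards the fixed attractor, pulling all $\phi^{n'}$ below $\delta$), combined with Lemma~\ref{lem:first_unforced} to guarantee escape from the boundary region so that forced phases recur. A secondary, more bookkeeping point is the non‑degeneracy convention that positions near the optimum never strictly improve the function value and hence never disturb the attractors; under a strict‑local‑optimum hypothesis this is immediate, and otherwise it is simply the convention already adopted for Lemmas~\ref{lem:stop}--\ref{lem:first_unforced}.
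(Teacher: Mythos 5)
Your proposal follows essentially the same route as the paper: the three phases are justified by Lemma~\ref{lem:row} (forced phase of length $k$ with probability $2^{-(k-1)}$, via Lemma~\ref{lem:prob}), Lemma~\ref{lem:stop} (lockout of length $N$), and Lemma~\ref{lem:first_unforced} (recovery), exactly as in the paper's discussion following the theorem. The one point you single out as the genuine obstacle --- almost-sure finiteness of the recovery phase and the renewal/ergodic step needed to conclude $\sigma(I)/|I|\to\partial\sigma$ --- is left informal in the paper as well, which argues it only by appeal to the empirical histogram in Fig.~\ref{fig:dist}, so your renewal--reward framing is a more careful rendering of the same argument rather than a genuinely different one.
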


Starting at one forced particle, the chance for the next $k$ particles to be forced has probability $2^{-k+1}$
as stated in Lemma \ref{lem:row} which is independent of the number of particles. At some particle $u$,
this sequence of forced particles ends, and in this moment the forced phase ends.
With Lemma \ref{lem:stop}, at least the following $N$ particles will not be forced
which constitutes the lockout phase.
The next particle that can be forced is $(u+1)\bmod N=v$,
and for that all particles $n$ that move between $u$ and $v$ need to keep partial potential
$\phi_d^n<\delta$.
All these particles use the regular swarm movement equations from Def.~\ref{modified}, and,
with $G$ being constant, they are independent of each other.
As shown in Lemma~\ref{lem:first_unforced}, the chance for a particle $n$ to get partial potential
$\phi^{n}_d>\delta$ can be quite probable
depending on the actual swarm parameters. Hence, with increasing $N$ 
the chance for a particle $n$ to get partial potential $\phi_d^n > \delta$ increases exponentially
in the order of the probability of the complementary event.
The exact probability for this event depends on the position of the particle relative to the global attractor.
The same is
true for the number of
iterations a particle needs to fix its partial potential $\phi$, but with normal swarm behavior the particle starts to converge to the global attractor and to decrease the potential. Fig.~\ref{fig:dist} shows the distance of the particles to the global attractor at the global optimum. We can see that the particles tend to converge to the attractor with high probability thus ending the recovery phase. Therefore, the 
particles once again start to get forced and the same pattern repeats itself.


\section{Termination criteria}
\label{sec:term}

In Sec.~\ref{sec:fm} above, we discussed the behavior of the particle swarm
when the global and local attractors are directly at a local optimum.
In \cite{SW:15}, it is shown that with f-PSO the
global attractor converges (in the infinite-time sense) to an
(at least local) optimum almost surely.
However, this convergence can be very slow.
Therefore, the particle swarm should be stopped at some time
and either be re-started with, when indicated, different parameter settings, or,
if the current global attractor
does not satisfy the desired quality, to use a different algorithm.
As
the swarm does not know when it is
close to a local optimum
and due to the forced steps,
the swarm will just converge up to a $\delta$-neighborhood of the global attractor
which in turn approached a (local) optimum.
With our observations from Sec.~\ref{sec:fm} this means that
the forcing frequency will exceed a fixed value
which can be measured.

In the following, we will define a new stopping criterion that stops the algorithm
at time when the f-PSO algorithm reaches this frequency.
Note that reaching this forcing frequency is just an indication that
the swarm is close to an optimum.

\subsection{Full stagnation criterion}
\label{sec:term1}

By Theorem~\ref{sat:stag}, we know that the absolute and relative forcing frequency become fixed if the swarm
would be at a (local) optimum, independent of the objective function.
So one can use an arbitrary known function with known optimum to experimentally determine this frequency by
starting the swarm in the optimum, as we did in Sec.~\ref{subsec:ExpSphere} for \Sphere.

We will use this observation to describe the
first simple termination criterion:
we just stop the swarm when the measured absolute forcing frequency begins to stagnate
and is close to the fixed forcing frequency around a (local) optimum.

\begin{defi}\label{def:term1}
  Let $\sigma_{\rm stag}(N,D,\mu)$ be the objective function-independent, fixed absolute forcing frequency
  around an optimum when intervals are considered with interval length $\mu$,
  and let $\gamma\in\mathbb{N}$.
  The criterion ``$(\gamma,\mu)$-{\sc full stop}'' is to terminate the {\rm f-PSO} execution
  iff $\sigma_{\rm stag}(N,D,\mu) - \sigma(I) \le \gamma$
for some interval $I$ with $|I|=\mu$ .
\end{defi}
We tested $(\gamma,\mu)$-{\sc full stop}
in Line~\ref{line-sixteen} of Algorithm~\ref{alg:modPSO}, i.\,e., f-PSO,
on our benchmark functions with $N=5$ particles, $D=15$ dimensions, $\delta=10^{-7}$ and
interval length $\mu =50\,000$.
The standard swarm parameters were used.
The stagnation frequency $\sigma_{\rm stag}(N,D,\mu) = 318\,350$ was measured on the \Sphere function
with all particles initialized at the global optimum.
We used $\gamma=1350$, so we terminated the execution
when $\sigma(I)\ge 317\,000$ for intervals $I$ of length $\mu$.
If better solutions would be required, one could reduce $\gamma$.
We compared the solutions obtained with f-PSO that used $(\gamma,\mu)$-{\sc full stop} with
the results obtained by f-PSO that run for $15\,000\,000$ iterations.
Table \ref{tab:term1_med} shows the median (not the average) and standard deviation on 500 runs
of the norm of the gradient of the global attractor on these benchmark functions with and without the use of
criterion $(\gamma,\mu)$-{\sc full stop}.
The less the gradient, the better the global attractor.
\begin{table}
  \centering
  \setlength{\extrarowheight}{2pt}
  \begin{tabular}[c]{| l | l | c | c |}
    \hline
    Function $f$ & $\hfill\text{iter}_{\rm term}\hfill$ & $||\nabla f(G_{{\rm iter}_{\rm term}})||$ & $||\nabla f(G_{15\,000\,000})||$ \\
    \hline
    \Sphere & $100\,000 \pm 0$ & $6.65\e{-8} \pm 6.93\e{-9}$ & $3.60\e{-8} \pm 3.83\e{-9}$ \\
    \hline
    \Elliptic  & $300\,000 \pm 154\,324$ & $2.22\e{-5} \pm 1.93\e{-5}$ & $1.17\e{-5} \pm 1.09\e{-5}$\\
    \hline
    \Schwefel & $150\,000 \pm 110\,707$ & $1.94\e{-7} \pm 1.20\e{-7}$ & $1.07\e{-7} \pm 5.96\e{-8}$ \\
    \hline
    \Rastrigin & $100\,000 \pm 5000$ & $1.34\e{-5} \pm 1.61\e{-6}$ & $7.82\e{-6} \pm 1.15\e{-6}$ \\
    \hline
    \Rosenbrock & $850\,000 \pm 490\,538$ & $3.09\e{-5} \pm 6.45\e{-6}$ & $2.04\e{-5} \pm 3.80\e{-6}$ \\
    \hline
  \end{tabular}
\caption{\label{tab:term1_med} Experimental results for $(\gamma,\mu)$-{\sc full stop}.
$\text{iter}_{\rm term}$ denotes the number of iterations (measured every $\mu$th iteration) until
$(\gamma,\mu)$-{\sc full stop} terminates the execution of f-PSO.
$G_t$ denotes the global attractor after $t$ iterations.
The final two columns present the median and (with $\pm$) the standard deviation of the gradient
at the global attractor over $500$ runs of the f-PSO with and without $(\gamma,\mu)$-{\sc full stop}.
In the tests,
$N=5$, $D=15$, $\mu = 50\,000$, $\sigma_{\rm stag}(N,D,\mu) = 318\,350$, $\gamma=1350$,  and $\delta=10^{-7}$.
The less the gradient, the better $G_t$.}
\end{table}
On simple benchmark functions like \Sphere criterion $(\gamma,\mu)$-{\sc full stop} produces sufficiently good output.
We see that the time of termination is significantly earlier than with pre-specified number of iterations
and that the difference in the gradient at termination
is negligible.

\subsection{Partial stagnation criterion}
\label{sec:term2}
On more complex functions like \Rosenbrock and \Rastrigin, we can identify two scenarios
that interfere with the {\sc full stop} termination criterion.

The
first scenario can be observed on the \Rosenbrock function. It may occur that some dimensions are already highly optimized,
i.\,e., close to an optimal coordinate, while others
are still far away from optimum coordinates, a situation which leads to late termination.
With \Rosenbrock when the particles and the attractors are in the valley between
the local and the global attractor, the chance to improve on the not yet optimized dimensions is voided by the worsening in the already good dimensions.
Therefore the swarm will not change the local (and, thus, the global) attractors and the attractors are
far away from each other. In this case the optimized dimensions 
reached the average stagnation frequency $\sigma_{\rm stag}(N,D,\mu)/D$,
but the other dimensions will not get forced much or not at all as shown in Fig.~\ref{fig:opt_steps_ros}.

The second scenario becomes visible on the \Rastrigin function.
This function has many local optima with a global optimum at $\vec 0$.
Unlike to \Rosenbrock there is not much difference in the behavior of the particles in
the different dimensions. Here, the problem originates from many local optima with only a small difference in function values.
This can lead to the global attractor being in a better local optimum than some of the local attractors.
Given this situation the particles have a very small probability to actually improve their local attractor.
At this point, the algorithm should terminate, but given the different positions of the global and the local attractors
these particles cannot decrease their velocity below the necessary bounds to reach the probability of a forced step
at the stagnation frequency. The current $\sigma$ will stay around some value, which
is different from the stagnation frequency of each dimension and with a fluctuation larger
than the fluctuation at the stagnation frequency. However, note that the probability for such an event is quite small and was not encountered
in our experiments.

To account for such situations we formulate an extended termination criterion.
\begin{defi}\label{def:term2}
  Let $\sigma_{\rm stag}(N,D,\mu)$ be as defined in Def.~\ref{def:term1}, and let $\kappa \in [1, D]$.
  The criterion ``$(\kappa,\gamma,\mu)$-{\sc partial stop}'' is to terminate the {\rm f-PSO} execution
  iff $\sigma(I)\ge \kappa\cdot\dfrac{\sigma_{\rm stag}(N,D,\mu)-\gamma} D$
  for some interval $I$ with $|I|=\mu$ .
\end{defi}

Def.~\ref{def:term2} introduces the additional parameter $\kappa$.
It can be interpreted as how many dimensions are required to reach their stagnation frequency such that f-PSO is stopped.
Actually, if $\kappa=D$, we have $(\gamma,\mu)$-{\sc full stop}.
By this parameter, the user can choose the degree of convergence necessary before terminating.
On complex objective functions that take a long time to evaluate, a small $\kappa$ may be favorable leading to
earlier termination and, hence, saving a lot of running time for applying further subsequent optimization methods
to further improve the solution quality. On simple, easily to be evaluated functions such as \Sphere, one may
choose $\kappa=D$, which results in $(\gamma,\mu)$-{\sc full stop}, as already mentioned.
With $(\kappa,\gamma,\mu)$-{\sc partial stop}, both interfering scenarios are accounted for.
There is no differentiation between dimensions, so it makes
no difference that like in the
first scenario some dimensions $d$ are forced such they have reached (or are close to)
their stagnation value $\partial\sigma_d$,
while in other dimensions still almost no forcing takes place.
Similarly, it might happen, like
in the second scenario, that in all dimensions the frequency is less than the stagnation value,
but the overall sum has reached the stagnation value.
We executed a first series of experiments similar to the case with the criterion $(\gamma,\mu)$-{\sc full stop}.
Again, $N=5$ particles, $D=15$ dimensions, $\delta=10^{-7}$, interval length $\mu =50\,000$,
$\sigma_{\rm stag}(N,D,\mu) = 318\,350$, and  $\gamma=1350$.
Tables \ref{tab:term2_med} and \ref{tab:term2_geo} show the results of the experiments.
We tested $(\kappa,\gamma,\mu)$-{\sc partial stop} with $\kappa=2$ and $\kappa=8$,
and for the fixed iteration limit, we used $15\,000\,000$ iterations.
We repeated the experiments 500 times.
\begin{table}
  \setlength{\tabcolsep}{4pt}
   \setlength{\extrarowheight}{2pt}
   \centering
  \begin{tabular}[c]{| l || c | c || c | c || c |}
    \hline
    Function $f$ & $\text{iter}_{\kappa=2}$ & $||\nabla f(G_{\text{iter}_{\kappa=2}})||$ & $\text{iter}_{\kappa=8}$ & $||\nabla f(G_{\text{iter}_{\kappa=8}})||$ & $||\nabla f(G_{15\,000\,000})||$  \B\\
    \hline
    \Sphere & \begin{tabular}{@{}l@{}} $50\,000$ \\$\pm 0$ \end{tabular}
            & \begin{tabular}{@{}l@{}} $7.24\e{-8}$ \\$\pm 7.50\e{-9}$ \end{tabular}
            & \begin{tabular}{@{}l@{}} $50\,000$ \\$\pm 0$ \end{tabular}
            & \begin{tabular}{@{}l@{}} $7.27\e{-8}$ \\$\pm 8.17\e{-9}$ \end{tabular}
            & \begin{tabular}{@{}l@{}} $3.60\e{-8}$ \\$\pm 3.83\e{-9}$ \end{tabular}\\
    \hline
    \Elliptic & \begin{tabular}{@{}l@{}} $50\,000$ \\$\pm 0$ \end{tabular}
              & \begin{tabular}{@{}l@{}} $2.52\e{-5}$ \\$\pm 2.22\e{-5}$ \end{tabular}
              & \begin{tabular}{@{}l@{}} $50\,000$ \\$\pm 0$ \end{tabular}
              & \begin{tabular}{@{}l@{}} $2.44\e{-5}$ \\$\pm 2.24\e{-5}$ \end{tabular}
              & \begin{tabular}{@{}l@{}} $1.17\e{-5}$ \\$\pm 1.09\e{-5}$ \end{tabular}\\
    \hline
    \Schwefel & \begin{tabular}{@{}l@{}} $50\,000$ \\$\pm 0$ \end{tabular}
              & \begin{tabular}{@{}l@{}} $2.22\e{-7}$ \\$\pm 1.24\e{-7}$ \end{tabular}
              & \begin{tabular}{@{}l@{}} $50\,000$ \\$\pm 0$ \end{tabular}
              & \begin{tabular}{@{}l@{}} $2.14\e{-7}$ \\$\pm 1.31\e{-7}$ \end{tabular}
              & \begin{tabular}{@{}l@{}} $1.07\e{-7}$ \\$\pm 5.96\e{-8}$ \end{tabular}\\
    \hline
    \Rastrigin & \begin{tabular}{@{}l@{}} $50\,000$ \\$\pm 0$ \end{tabular}
               & \begin{tabular}{@{}l@{}} $1.45\e{-5} $\\$\pm 1.65\e{-6}$ \end{tabular}
               & \begin{tabular}{@{}l@{}} $50\,000$ \\$\pm 0$ \end{tabular}
               & \begin{tabular}{@{}l@{}} $1.46\e{-5}$\\$\pm 1.72\e{-6}$ \end{tabular}
               & \begin{tabular}{@{}l@{}} $7.82\e{-6}$ \\$\pm 1.15\e{-6}$ \end{tabular}\\
    \hline
    \Rosenbrock & \begin{tabular}{@{}l@{}} $100\,000$ \\$\pm 730\,696$ \end{tabular}
                & \begin{tabular}{@{}l@{}} $8.54\e{-5}$ \\$\pm 3.78\e{-5}$ \end{tabular}
                & \begin{tabular}{@{}l@{}} $150\,000$ \\$\pm 469\,614$ \end{tabular}
                & \begin{tabular}{@{}l@{}} $7.02\e{-5}$ \\$\pm 1.80\e{-5}$ \end{tabular}
                & \begin{tabular}{@{}l@{}} $2.04\e{-5}$ \\$\pm 3.80\e{-6}$ \end{tabular}\\
    \hline
  \end{tabular}
\caption{\label{tab:term2_med}Experimental results for $(\kappa,\gamma,\mu)$-{\sc partial stop}
(for notions, see Table~\ref{tab:term1_med}).
\emph{Median} and (with $\pm$) the standard deviation over 500 runs of the stopped f-PSO with
$N=5$, $D=15$, $\mu = 50\,000$, $\sigma_{\rm stag}(N,D,\mu) = 318\,350$, $\gamma=1350$,  and $\delta=10^{-7}$,
and $\kappa=2$ and $\kappa=8$, resp., against an `unstopped' f-PSO terminated after $15\,000\,000$ iterations.}
\end{table}
\begin{table}
  \centering
  \begin{tabular}[c]{| l || c | c || c | c || c |}
    \hline
    Function $f$ & $\text{iter}_{\kappa=2}$ & $||\nabla f(G_{\text{iter}_{\kappa=2}})||$ & $\text{iter}_{\kappa=8}$ & $||\nabla f(G_{\text{iter}_{\kappa=8}})||$ & $||\nabla f(G_{15\,000\,000})||$ \B\\
    \hline
    \Sphere & $50\,000$ & $7.15\e{-8}$ & $50\,000$ & $7.12\e{-8}$ & $3.60\e{-8}$  \\
    \hline
    \Elliptic & $50\,000$ & $1.95\e{-5}$ & $50\,000$ & $1.95\e{-5}$ & $1.17\e{-5}$ \\
    \hline
    \Schwefel & $50\,000$ & $1.04\e{-7}$ & $50\,000$ & $1.07\e{-7}$ & $1.07\e{-7}$ \\
    \hline
    \Rastrigin & $50\,000$ & $1.43\e{-5}$ & $50\,000$ & $1.44\e{-5}$ & $7.82\e{-6}$ \\
    \hline
    \Rosenbrock & $153\,672$ & $8.80\e{-5}$ & $176\,989$ & $6.91\e{-5}$ & $2.04\e{-5}$\\
    \hline
  \end{tabular}
  \caption{\label{tab:term2_geo}Experimental results for $(\kappa,\gamma,\mu)$-{\sc partial stop}
(for notions, see Table~\ref{tab:term1_med}). \emph{Geometric Mean}
over 500  runs of the stopped f-PSO with
$N=5$, $D=15$, $\mu = 50\,000$, $\sigma_{\rm stag}(N,D,\mu) = 318\,350$, $\gamma=1350$,  and $\delta=10^{-7}$,
and $\kappa=2$ and $\kappa=8$, resp., against an `unstopped' f-PSO terminated after $15\,000\,000$ iterations.}
\end{table}
We conclude that in most cases a small value of $\kappa$ is sufficient
without a significant loss
in the quality of the returned solution.
However, the \Rosenbrock measurement suggests that on complex objective
functions, a small value of $\kappa$ might lead to too early termination.

If there are many particles and/or a small number of dimensions,
the neighborhood of a local optimum can be reached way sooner.
Therefore, the question comes up whether the interval length can be reduced without
losing
the quality of $(\kappa,\gamma,\mu)$-{\sc partial stop}.
We repeated our experiments for an interval length of $\mu=5\,000$.
Here, $N=5$ particles, $D=15$ dimensions, $\delta=10^{-7}$
and adjusted, $\sigma_{\rm stag}(N,D,\mu) = 31\,835$, and  $\gamma=135$
were applied.
The results are presented in Tables \ref{tab:term2_med_interval} and \ref{tab:term2_geo_interval}.
\begin{table}
  \centering
  \setlength{\tabcolsep}{4pt}
  \setlength{\extrarowheight}{2pt}
  \begin{tabular}[c]{| l || c | c || c | c || c |}
    \hline
    Function $f$ & $\text{iter}_{\kappa=2}$ & $||\nabla f(G_{\text{iter}_{\kappa=2}})||$ & $\text{iter}_{\kappa=8}$ & $||\nabla f(G_{\text{iter}_{\kappa=8}})||$ & $||\nabla f(G_{15\,000\,000})||$ \B\\
    \hline
    \Sphere & \begin{tabular}{@{}l@{}} $5\,000$ \\$\pm 0$ \end{tabular}
            & \begin{tabular}{@{}l@{}} $1.10\e{-7}$ \\$\pm 1.37\e{-8}$ \end{tabular}
            & \begin{tabular}{@{}l@{}} $10\,000$ \\$\pm 2200$ \end{tabular}
            & \begin{tabular}{@{}l@{}} $9.52\e{-8}$ \\$\pm 1.22\e{-8}$ \end{tabular}
            & \begin{tabular}{@{}l@{}} $3.63\e{-8}$ \\$\pm 3.73\e{-9}$ \end{tabular}
            \\
    \hline
    \Elliptic & \begin{tabular}{@{}l@{}} $5\,000$ \\$\pm 1310$ \end{tabular}
              & \begin{tabular}{@{}l@{}} $1.11\e{-4}$ \\$\pm 1.54\e{-4}$ \end{tabular}
              & \begin{tabular}{@{}l@{}} $10\,000$ \\$\pm 0$ \end{tabular}
              & \begin{tabular}{@{}l@{}} $3.93\e{-5}$ \\$\pm 3.73\e{-5}$ \end{tabular}
              & \begin{tabular}{@{}l@{}} $1.28\e{-5}$ \\$\pm 1.00\e{-5}$ \end{tabular}
              \\
    \hline
    \Schwefel & \begin{tabular}{@{}l@{}} $15\,000$ \\$\pm 2502$ \end{tabular}
              & \begin{tabular}{@{}l@{}} $3.87\e{-7}$ \\$\pm 2.18\e{-7}$ \end{tabular}
              & \begin{tabular}{@{}l@{}} $15\,000$ \\$\pm 1034$ \end{tabular}
              & \begin{tabular}{@{}l@{}} $3.12\e{-7}$ \\$\pm 1.82\e{-7}$ \end{tabular}
              & \begin{tabular}{@{}l@{}} $1.01\e{-7}$ \\$\pm 6.02\e{-8}$ \end{tabular}
              \\
    \hline
    \Rastrigin & \begin{tabular}{@{}l@{}} $5\,000$ \\$\pm 0$ \end{tabular}
               & \begin{tabular}{@{}l@{}} $2.12\e{-5}$ \\$\pm 2.69\e{-6}$ \end{tabular}
               & \begin{tabular}{@{}l@{}} $5\,000$ \\$\pm 2449$ \end{tabular}
               & \begin{tabular}{@{}l@{}} $1.97\e{-5}$ \\$\pm 2.64\e{-6}$ \end{tabular}
               & \begin{tabular}{@{}l@{}} $7.82\e{-6}$ \\$\pm 1.13\e{-6}$ \end{tabular}
               \\
    \hline
    \Rosenbrock & \begin{tabular}{@{}l@{}} $65\,000$ \\$\pm 442\,699$ \end{tabular}
                & \begin{tabular}{@{}l@{}} $1.40\e{-3}$ \\$\pm 1.04\e{-3}$ \end{tabular}
                & \begin{tabular}{@{}l@{}} $80\,000$ \\$\pm 381\,917$ \end{tabular}
                & \begin{tabular}{@{}l@{}} $1.74\e{-4}$ \\$\pm 5.34\e{-5}$ \end{tabular}
                & \begin{tabular}{@{}l@{}} $2.04\e{-5}$ \\$\pm 4.03\e{-6}$ \end{tabular}
                \\
    \hline
  \end{tabular}
  \caption{\label{tab:term2_med_interval}Experimental results for $(\kappa,\gamma,\mu)$-{\sc partial stop} with reduced interval length
(for notions, see Table~\ref{tab:term1_med}).
\emph{Median} and (with $\pm$) the standard deviation over 500 runs of the stopped f-PSO with
$N=5$, $D=15$,
$\pmb{\mu = 5\,000}$,
$\pmb{\sigma}_{\textbf{stag}}\pmb{(N,D,\mu)= 31\,835}$, $\gamma=1350$,  and $\delta=10^{-7}$,
and $\kappa=2$ and $\kappa=8$, resp., against an `unstopped' f-PSO terminated after $15\,000\,000$ iterations.}
\end{table}
\begin{table}
  \centering
  \begin{tabular}[c]{| l || c | c || c | c || c |}
    \hline
    Function $f$ & $\text{iter}_{\kappa=2}$ & $||\nabla f(G_{\text{iter}_{\kappa=2}})||$ & $\text{iter}_{\kappa=8}$ & $||\nabla f(G_{\text{iter}_{\kappa=8}})||$ & $||\nabla f(G_{15\,000\,000})||$ \B\\
    \hline
    \Sphere & $5\,000$ & $1.09\e{-7}$ & $8\,337$ & $9.41\e{-8}$ & $3.58\e{-8}$  \\
    \hline
    \Elliptic & $5\,262$ & $9.68\e{-5}$ & $10\,000$ & $3.15\e{-5}$ & $9.62\e{-6}$ \\
    \hline
    \Schwefel & $12\,276$ & $3.84\e{-7}$ & $15\,150$ & $3.15\e{-7}$ & $1.04\e{-7}$ \\
    \hline
    \Rastrigin & $5\,000$ & $2.09\e{-5}$ & $6\,587$ & $1.96\e{-5}$ & $7.75\e{-6}$ \\
    \hline
    \Rosenbrock & $92\,560$ & $1.41\e{-3}$ & $111\,448$ & $1.80\e{-4}$ & $2.01\e{-5}$\\
    \hline
  \end{tabular}
  \caption{\label{tab:term2_geo_interval}Experimental results for $(\kappa,\gamma,\mu)$-{\sc partial stop} with reduced interval length
(for notions, see Table~\ref{tab:term1_med}).
\emph{Geometric Mean} over 500 runs of the stopped f-PSO with
$N=5$, $D=15$,
$\pmb{\mu = 5\,000}$,
$\pmb{\sigma}_{\textbf{stag}}\pmb{(N,D,\mu)= 31\,835}$, $\gamma=1350$,  and $\delta=10^{-7}$,
and $\kappa=2$ and $\kappa=8$, resp., against an `unstopped' f-PSO terminated after $15\,000\,000$ iterations.}
\end{table}
As presumed for the simple functions,
f-PSO with $(\kappa,\gamma,\mu)$-{\sc partial stop}
terminates significantly earlier with only a small decline in the quality of the returned solution.
For \Rosenbrock, the difference in the quality of the returned solution
with different choices of the parameter $\kappa$ becomes more distinct.
Furthermore it should be noted that as shown in Fig.~\ref{fig:forced_dimensions_interval} the standard deviation increases
drastically with shorter intervals. In practice a small tolerance is used when testing if the stagnation value is reached. Given this standard deviation on small intervals, a tolerance that compensates this
standard deviation can lead to a too early termination when the stagnation value is not reached and the swarm would recover from the stagnation. This leads to a tradeoff between interval length and the
risk of a termination when the swarm has not reached or is close to a local optimum.


\section{Conclusions}
\label{sec:conclusion}

This paper focused on developing stopping criteria for the f-PSO algorithms.
We showed that at an optimum point the swarm behaves like a blind searching algorithm
and that therefore the number of forced steps in a time interval,
the absolute forcing frequency, is larger than an objection function-independent number.
Thus, reaching this frequency it can be presumed that the swarm is close to
an optimum solution, and it can be stopped.

\section*{Acknowledgements}

We would like to thank Alexander Ra\ss{} for fruitful discussions.
Research funded in parts by the School of Engineering of the University of
Erlangen-Nuremberg.


\bibliographystyle{alpha}
\bibliography{literature}   

\end{document}